\pdfoutput=1
\documentclass[10pt]{article} 

\usepackage[margin=1.1in]{geometry}
\usepackage{latexsym}
\usepackage{amsmath,amssymb,amsfonts,amsthm}
\usepackage{mathtools}
\usepackage{algorithm}
\usepackage[noend]{algpseudocode}
\usepackage{xcolor}
\usepackage[authoryear, round]{natbib}
\usepackage{graphicx}
\usepackage{booktabs}
\usepackage{hyperref}
\usepackage{cleveref}
\usepackage{enumitem}
\usepackage{bm}
\usepackage{titlesec}
\usepackage{fancyhdr}
\usepackage{microtype}
\usepackage{parskip}

\usepackage{tikz}
\usetikzlibrary{arrows.meta,positioning,fit,backgrounds}

\newtheorem{theorem}{Theorem}
\newtheorem{proposition}{Proposition}
\newtheorem{corollary}{Corollary}

\newtheorem{definition}{Definition}
\theoremstyle{remark}
\newtheorem{remark}{Remark}
\newtheorem{assumption}{Assumption}

\hypersetup{
  colorlinks=true,
  linkcolor=blue!70!black,
  citecolor=blue!50!black,
  urlcolor=blue!60!black
}

\newcommand{\R}{\mathbb{R}}
\providecommand{\E}{\mathbb{E}}
\providecommand{\Prob}{\mathbb{P}}
\newcommand{\Dcal}{\mathcal{D}}
\newcommand{\Xcal}{\mathcal{X}}
\newcommand{\Ycal}{\mathcal{Y}}
\newcommand{\Ical}{\mathcal{I}}

\newcommand{\Fcal}{\mathcal{F}}
\newcommand{\Lcal}{\mathcal{L}}

\newcommand{\Ccal}{\mathcal{C}}

\newcommand{\Scal}{\mathcal{S}}

\newcommand{\wh}[1]{\widehat{#1}}
\newcommand{\FDR}{\mathrm{FDR}}

\newcommand{\Power}{\mathrm{Power}}

\newcommand{\ind}{\mathbb{I}}

\newcommand{\cC}{\mathcal C}
\newcommand{\cCnull}{\mathcal C_0}

\titleformat{\section}{\large\bfseries\color{blue!60!black}}{{\thesection}}{1em}{}[\titlerule]
\titleformat{\subsection}{\normalsize\bfseries\color{blue!40!black}}{{\thesubsection}}{1em}{}
\titleformat{\subsubsection}{\normalsize\itshape\bfseries}{{\thesubsubsection}}{1em}{}

\begin{document}

\begin{center}
  {\LARGE\bfseries \textsf{Null-Calibrated Conformal Selection \\ [10pt]
  via Target-Membership Scores}}\\[2em]
  {\large  Seungjin Choi}\\[1em]
  {\normalsize CROID Research and aSSIST University, Korea}
  \end{center}

\vspace{0.5em}
\noindent\rule{\linewidth}{1.5pt}
\vspace{0.5em}

\begin{abstract}
Conformal selection aims to identify test candidates whose unknown responses fall in a target region while controlling the false discovery rate.  Existing methods often inherit prediction-oriented nonconformity scores, such as residual or clipped residual scores, from conformal prediction.  We argue that the natural score for selection is instead the target-membership probability $\eta_{\Ical}(x)=\Prob(Y\in\Ical\mid X=x)$.  This score directly addresses the binary event being selected, and any monotone transform of it gives the Neyman--Pearson oracle ranking at a fixed null selection level.  This distinction is irrelevant for mean-monotone targets, where conventional scores induce essentially the same ranking, but becomes important for interval-valued, variance-driven, multimodal, or multi-condition targets, where prediction-oriented scores can be misaligned with selection power.  We study membership-score-based conformal selection and isolate one conformal calibration route, Null-Calibrated Conformal Selection (NCCS), which ranks test scores against confirmed non-target calibration examples.  Under null exchangeability, NCCS yields finite-sample valid null p-values, which can be combined with BY under arbitrary dependence or with BH under standard positive-dependence conditions.  Experiments support the score principle: membership scores match conventional scores on mean-monotone targets, substantially improve over mean-score selection on variance-driven targets, and, when calibrated by NCCS, trade power for finite-sample null validity in rare-target regimes where direct empirical-FDP thresholding can be anti-conservative.
\end{abstract}


\section{Introduction}
\label{sec:intro}

Many scientific and decision-making problems require selecting promising candidates
from a large pool before their outcomes are observed.
Examples include prioritizing molecules whose activity lies in a desired therapeutic range \citep{GomezBombarelliR2018acs},
identifying patients likely to benefit from a treatment \citep{WagerS2018jasa,KunzelSR2019pnas},
or screening candidates expected to satisfy a scientific or operational target.
False discoveries are costly because selected candidates often trigger expensive follow-up experiments or interventions.
At the same time, a valid procedure that selects almost nothing is of limited practical value.
The goal is therefore to control false discoveries while retaining high power for detecting truly useful candidates.

Conformal selection formalizes this goal by converting candidate-wise uncertainty assessments
into p-values and then applying a multiple-testing procedure.
Given test covariates $\{X_{N+j}\}_{j=1}^m$ with unobserved responses $\{Y_{N+j}\}_{j=1}^m$
and a user-specified target region $\Ical\subseteq\Ycal$, the procedure outputs a selected set
\[
    \widehat{\Scal}\subseteq\{1,\ldots,m\}
\]
intended to contain candidates satisfying
\[
    Y_{N+j}\in\Ical .
\]
Validity is measured by the false discovery rate (FDR)
\[
    \FDR
    =
    \E\!\left[
    \frac{\#\{j:\,j\in\widehat{\Scal},\,Y_{N+j}\notin\Ical\}}
         {1\vee |\widehat{\Scal}|}
    \right],
\]
where $a\vee b=\max\{a,b\}$, so the denominator is one when no candidate is selected.
Power is measured by the fraction of all target-satisfying candidates that are selected.

The usual conformal selection pipeline has two ingredients: a score and a calibration rule.
The calibration rule is responsible for validity and the score is responsible for power.
Much of the existing methodology inherits scores from conformal prediction, such as signed residuals,
conditional quantile scores, or clipped residual constructions.
These scores are natural when the target event is a one-sided exceedance and target membership is
nearly monotone in a conditional mean.
However, they can be structurally misaligned with the scientific selection target.
For example, if the target is an interval $\Ical=[a,b]$, an extreme value of the conditional mean
may be bad rather than good.
If the target is variance-driven, the conditional mean may contain no information at all.
If several target regions are considered simultaneously, no single mean direction can represent all of them.

The central observation of this paper is a principle about \emph{which score
conformal selection should use}.  The selection-oriented score is not a residual
or a point prediction.  It is the target-membership probability
\[
    \eta_{\Ical}(x)=\Prob(Y\in\Ical\mid X=x).
\]
This score directly answers the question asked by the selection problem: among candidates with covariate value $x$,
how likely is the response to lie in the target region?
Defining the indicator variable
\[
    Z=\ind\{Y\in\Ical\}
    =
    \begin{cases}
    1, & \text{if } Y\in\Ical,\\
    0, & \text{otherwise},
    \end{cases}
\]
score construction becomes a binary classification problem for the target event.
The Neyman--Pearson lemma then implies that thresholding any monotone transform of $\eta_{\Ical}(x)$
is the population oracle rule for maximizing the target selection probability at a fixed null selection level.

The value of stating this as a principle is that it makes precise both when it
matters and when it does not.  When target membership is monotone in a conditional
mean, as in a one-sided exceedance target, the membership ranking coincides with
the ranking induced by a residual, clipped, or likelihood-ratio score, so the
principle recovers existing practice and offers nothing new.  The principle becomes useful for targets where membership is \emph{not} mean-monotone, namely interval,
variance-driven, multimodal, and multi-condition targets, where conventional
scores are misaligned with the selection objective.  We make this alignment
boundary precise in Section~\ref{sec:method}, and our experiments are designed to
test it directly rather than to crown a single method.

A second question is how to turn a membership score into a selection rule.  This
is a matter of calibration, and there are several routes.  Direct empirical-FDP
thresholding \citep{QinJ2025arxiv} chooses a threshold from the labeled
calibration sample and can be more aggressive, and more powerful, when the score
is accurate.  Optimized conformal selection \citep{BaiT2024arxiv} selects among a
library of scores while preserving validity.  We focus on the conformal route,
which converts the membership ranking into null-calibrated conformal p-values, and
we show it carries a property the others do not: finite-sample null-p-value
validity.  Combined with BY, or with BH under standard positive-dependence
conditions, this gives an FDR-controlling selection rule.  We are explicit that
the conformal route is not always the most powerful way to use a membership
score; its distinguishing feature is the finite-sample calibration guarantee.

We refer to the resulting method as \emph{Null-Calibrated Conformal Selection} (NCCS).
The method first learns a {\em target-membership score} $g(x)$, where larger values of $g(x)$ indicate that
the candidate with covariate $x$ is more likely to satisfy the target condition $Y\in\Ical$.
It then forms p-values by comparing each test score only to calibration points whose observed responses
are outside the target region. Let $\cC$ denote the calibration index set and let
\[
\cCnull= \{i\in\cC:Y_i\notin\Ical \}
\]
be the subset of observed non-target calibration examples. The null-calibrated p-value is
\[
p_j(g) = \frac{1+\#\{i\in\cCnull:g(X_i)\ge g(X_{N+j})\}} {|\cCnull|+1}.
\]
This rank-calibration step is closely related to conformal p-values for outlier detection,
where a test score is compared with a reference sample from the null or inlier distribution \citep{BatesS2023aos},
and to class-conditional conformal prediction, where calibration is restricted to a relevant label class or stratum
\citep{VovkV2012acml,DingT2023neurips}.
Our contribution is to use this null-reference rank construction with a learned target-membership score
for general selection regions.
Under null exchangeability, this p-value is super-uniform for null test candidates.
In this paper we combine these p-values with the Benjamini--Hochberg procedure \citep{BenjaminiY95jrsssb},
yielding a practical FDR-controlling selection rule under standard positive-dependence conditions.
More conservative alternatives for arbitrary dependence are discussed in Section~\ref{sec:theory}.

This paper makes the principle, rather than any single method, the central object.

\begin{enumerate}[leftmargin=1.5em]
\item \textbf{A target-membership principle for conformal selection.}  We argue
that the natural selection score is the target-membership probability
$\eta_{\Ical}(x)=\Prob(Y\in\Ical\mid X=x)$, because selection is a binary
classification of the target event rather than a prediction of the response
level.  Thresholding any monotone transform of $\eta_{\Ical}$ is the
Neyman--Pearson oracle ranking at a fixed null selection level.
\item \textbf{A score--target alignment characterization.}  We make precise when
conventional prediction-oriented scores suffice and when they fail.  For targets
that are monotone level sets of a conditional mean, residual, clipped, and
likelihood-ratio scores induce the \emph{same} selection ranking as
$\eta_{\Ical}$, so membership scoring offers no new ranking.  For interval,
variance-driven, multimodal, and multi-condition targets, these scores are
misaligned with target membership, and we identify the precise failure mode.
Membership scoring is the canonical repair because it depends on the target only
through the event $Y\in\Ical$.
\item \textbf{Calibration routes and a finite-sample validity anchor.}  A
membership score can be turned into a selection rule in several ways: a
null-calibrated conformal p-value, a direct empirical false-discovery-proportion
(FDP) threshold, or a library-based optimized conformal selection.  We study this
family and isolate one property that distinguishes the conformal route.
Null-calibrated conformal selection (NCCS) produces finite-sample valid null
p-values under null exchangeability and becomes an FDR-controlling procedure when
combined with an appropriate multiple-testing rule.  Direct empirical-FDP
thresholding controls FDR only asymptotically and can exceed the nominal level
when target candidates are rare.
\item \textbf{Empirical study of the principle.}  Our experiments are organized
around the principle rather than around promoting one method.  Membership-score methods, including NCCS and direct thresholding, behave
similarly to conventional scores on mean-monotone targets and are much stronger
than mean-score selection on variance-driven interval targets.  Among the
calibration routes, NCCS is more conservative on power but keeps its
finite-sample null-calibration advantage precisely in the rare-target regime
where direct thresholding inflates.
\end{enumerate}

We state plainly what we do \emph{not} claim.  NCCS is not presented as a method
that dominates direct Neyman--Pearson thresholding or carefully transformed
clipped-score baselines on power.  The contribution is the principle and its
characterization, with NCCS as the calibration route that carries a finite-sample null-validity
guarantee.

The rest of the paper is organized as follows.  Section~\ref{sec:related}
positions the target-membership principle within conformal selection, density-ratio
and classification-based selection, and the broader false-discovery-rate
literature.  Section~\ref{sec:setup} states the selection problem.
Section~\ref{sec:method} develops the principle: the membership score and its
Neyman--Pearson ranking, the score--target alignment characterization, and the
calibration routes.  Section~\ref{sec:theory} establishes the finite-sample
validity of the conformal route together with its power-stability behavior.
Section~\ref{sec:experiments} reports experiments organized around the principle
and the validity gap, and Section~\ref{sec:discussion} discusses limitations and
practical use.

\section{Related Work}
\label{sec:related}

Conformal selection combines conformal p-values with multiple testing in order
to select test candidates while controlling the false discovery rate.  It builds
on conformal prediction, which produces distribution-free prediction sets with
finite-sample validity under exchangeability
\citep{VovkV2005book,LeiJ2018jasa,AngelopoulosAN2023ftml}.  The
cfBH method of \citet{JinY2023jmlr} constructs conformal p-values for random
hypotheses and applies the Benjamini--Hochberg procedure to form a selected
set.  Subsequent work has extended this idea to richer selection settings,
including multi-condition and multivariate problems
\citep{HaoQ2026iclr,BaiT2025icml}.  NCCS follows this conformal multiple
testing viewpoint, but changes the score that is calibrated.  Instead of using
a prediction score that is later adapted to a selection task, NCCS learns a
score for the event that the response belongs to the target region.

The p-values in NCCS are related to conformal ranks computed against a null or
reference sample.  In conformal outlier testing, test scores are compared with
calibration scores from a reference distribution to obtain conformal p-values
for multiple testing \citep{BatesS2023aos}.  Class-conditional conformal
prediction also uses restricted calibration by computing thresholds within a
specified label class or stratum rather than over the full calibration set
\citep{VovkV2012acml,DingT2023neurips}.  NCCS uses the same rank-based principle for selection.  The reference set is
the observed non-target calibration subset $\{i:Y_i\notin\Ical\}$, and the
score is learned to approximate the target-membership probability
$\Prob(Y\in\Ical\mid X=x)$.

A closely related family of methods is cfBH and its variants.  These methods
often rely on residual, clipped residual, conditional quantile, or
distributional scores.  Such scores are natural when the target is one-sided
and aligned with a conditional mean or a monotone prediction boundary.  They
can be less aligned with interval targets, variance-driven targets, multimodal
targets, or targets defined by several conditions.  NCCS addresses this
alignment issue by learning the target-membership probability directly.
When the scientific target is well represented by the chosen prediction score, cfBH is often
the appropriate method.

Recent work has shown that the power of conformal selection depends strongly
on the score.  For instance, \citet{JinY2023jmlr} relate asymptotic power to
the AUROC of the score.  \citet{BaiT2024arxiv} develop optimized conformal
selection, which chooses among a library of candidate scores while preserving
validity.  This direction is complementary to NCCS.  If the library contains a
good target-membership score, optimized conformal selection can select a score
that is close in spirit to the one used by NCCS.  The role of NCCS is to
identify target membership itself as the population quantity to learn.  This
score can be used on its own, or included as a candidate score in an optimized
conformal selection procedure.

The closest conceptual comparison is direct Neyman--Pearson selection
\citep{QinJ2025arxiv}.  For the one-sided target $\Ical=[c,\infty)$, their
likelihood ratio is, up to a marginal constant, the ratio of the conditional
non-target probability to the conditional target probability,
\[
    R(x;c)=\frac{\Prob(Y\le c\mid X=x)}{\Prob(Y>c\mid X=x)} .
\]
Thus $R(x;c)$ is a decreasing monotone transform of the target-membership
probability $\Prob(Y>c\mid X=x)$, and selecting small values of $R(x;c)$
is equivalent to selecting large values of the membership probability.
Consequently, in one-sided problems, direct Neyman--Pearson
selection and NCCS target essentially the same population ranking.  The main
difference lies in calibration.  Direct Neyman--Pearson selection chooses a
threshold through an empirical FDP constraint and can be more powerful when the
score is accurate.  NCCS converts the ranking into null-calibrated conformal
p-values and then applies a multiple-testing rule.  This can be more
conservative, but it retains a conformal p-value interpretation and applies to
general target regions without changing the score-learning principle.

Table~\ref{tab:positioning} summarizes the comparison.  The claim of
this paper is not that NCCS uniformly dominates cfBH, optimized conformal
selection, or direct Neyman--Pearson thresholding.  The claim is that target
membership learning gives a direct selection score for general target regions,
and that null calibration provides a simple conformal p-value route for using this score.

\begin{table}[t]
\centering
\caption{Conceptual positioning of NCCS relative to close baselines.}
\label{tab:positioning}
\begin{tabular}{p{0.15\textwidth}p{0.23\textwidth}p{0.23\textwidth}p{0.28\textwidth}}
\toprule
Method & Main score & Calibration mechanism & Main strength and limitation \\
\midrule
cfBH \citep{JinY2023jmlr} & Residual, clipped, or prediction-oriented conformal score & Conformal p-values with BH or another FDR rule & Simple and classical. Strong for one-sided mean-monotone targets, but can be misaligned for interval or variance-driven targets. \\
\addlinespace
OptCS \citep{BaiT2024arxiv} & Optimized score from a candidate library & Adaptive conformal selection calibration & Flexible and potentially strong. Can match NCCS if the membership score is included, but requires a suitable library. \\
\addlinespace
Direct Neyman--Pearson \citep{QinJ2025arxiv} & Likelihood ratio or membership odds score & Direct empirical FDP threshold & Often powerful for one-sided targets. Can be more aggressive, but does not use conformal p-values and is typically asymptotic. \\
\addlinespace
NCCS (this paper) & $\eta_{\Ical}(x)=\Prob(Y\in\Ical\mid X=x)$ & Null-calibrated conformal p-values with BH & Natural for general target regions and has clean null p-value validity. May be more conservative than direct thresholding. \\
\bottomrule
\end{tabular}
\end{table}

Target membership scores can be learned in several ways.  One can fit a binary
classifier for the event $Y\in\Ical$, or estimate the full conditional
distribution of $Y$ given $X$ and integrate it over $\Ical$.  This connects
NCCS to distributional regression, Bayesian posterior predictive modeling, and
conformal Bayesian approaches.  Under covariate shift, the null calibration
sample may no longer be exchangeable with the null test candidates, and
weighted calibration becomes necessary.  Weighted conformal p-values provide
one route for fixed scores \citep{JinY2026biometrika}.  Combining such
weighting with target membership learning is a natural extension.

Learning the target-membership probability is, formally, a class-probability or
density-ratio estimation problem, since by Bayes' rule the membership odds
$\eta_\Ical(x)/(1-\eta_\Ical(x))$ are proportional to the ratio of the
target-conditional to the non-target-conditional covariate densities.  Our score
step therefore connects to the broad literature on probabilistic classification
and density-ratio estimation
\citep{SugiyamaM2012book,SugiyamaM2008aism,ZadroznyB2001icml}, and to
cost-sensitive and imbalanced classification, since the relevant operating regime
is the upper tail of the score where target candidates may be scarce.  The
cost-sensitive logistic surrogate we use in Section~\ref{sec:algorithm} is one
standard estimator of this ratio, and any calibrated classifier or distributional
regressor that targets $\Prob(Y\in\Ical\mid X)$ can be substituted.  Viewing the
score as a density ratio also clarifies the alignment characterization of
Section~\ref{sec:alignment}: prediction-oriented scores estimate a location
functional of the conditional law, which equals a monotone transform of the
density ratio only for monotone level-set targets.

The selection objective also connects this work to the broader literature on
false-discovery-rate control and selective inference.  The
Benjamini--Hochberg and Benjamini--Yekutieli procedures
\citep{BenjaminiY95jrsssb,BenjaminiY2001aos} provide the multiplicity layer used
in our method.  More generally, adaptive and empirical-Bayes approaches to FDR
control, including methods that estimate the proportion of true nulls
\citep{StoreyJD2002jrsssb}, are relevant whenever target candidates are rare and
the null fraction plays an important role.  Another related line of work controls
false discoveries through constructed negative controls, as in the knockoff
filter and its model-X extension \citep{BarberRF2015aos,CandesE2018jrsssb}.
Our setting is different in mechanism.  Rather than constructing knockoff
features, conformal selection uses held-out calibration data to form
candidate-wise p-values.  In NCCS, the p-value is a rank statistic computed
against confirmed non-target calibration examples, which gives finite-sample null
validity without modeling the null score distribution.  Direct empirical-FDP
thresholding \citep{QinJ2025arxiv} uses the same selection-oriented spirit, but
chooses a threshold by estimating the FDP on a labeled calibration sample.  The
experiments in Section~\ref{sec:experiments} compare these two calibration
mechanisms and illustrate the practical trade-off between finite-sample
null-calibrated p-values and more aggressive empirical-FDP thresholding.

\section{Problem Setup and Conformal Selection}
\label{sec:setup}

\subsection{Data Split and Goal}

We observe i.i.d.\ data $\{(X_i,Y_i)\}_{i=1}^N$ from an unknown
distribution $P_{XY}$ over $\Xcal\times\Ycal\subseteq\Xcal\times\R$.
A practitioner specifies a target region $\Ical\subset\Ycal$ and a target
FDR level $q\in(0,1)$.  Define the binary target indicator
$Z_i=\ind\{Y_i\in\Ical\}$.  We write $\pi_1=\Prob(Z=1)$ and
$\pi_0=\Prob(Z=0)=1-\pi_1$ for the marginal probabilities of target and
non-target membership, respectively.

The labeled data are partitioned into three disjoint subsets,
\[
  \Dcal = \underbrace{\Dcal_\mathrm{tr}}_{n_{\mathrm{tr}}\text{ samples}}
         \cup\underbrace{\Dcal_\mathrm{sc}}_{n_{\mathrm{sc}}\text{ samples}}
         \cup\underbrace{\Dcal_\mathrm{cal}}_{n_{\mathrm{cal}}\text{ samples}},
\]
with $N = n_{\mathrm{tr}} + n_{\mathrm{sc}} + n_{\mathrm{cal}}$.
The subset $\Dcal_\mathrm{tr}$ is used to fit a base predictor,
$\Dcal_\mathrm{sc}$ is used to learn the selection score, and
$\Dcal_\mathrm{cal}$ is used to build conformal p-values.
Section~\ref{sec:method} gives the details of these roles.
The test covariates $\{X_{N+j}\}_{j=1}^m$ are observed, while their responses
$\{Y_{N+j}\}_{j=1}^m$ are not.

The goal is to produce a selected set $S\subseteq\{1,\ldots,m\}$ that controls
FDR and has high power.  We define
\begin{equation}
  \label{eq:fdr_power}
  \FDR(S) = \E\!\left[\frac{\#\{j\in S: Z_{N+j}=0\}}{1\vee|S|}\right] \le q,
  \qquad
  \Power(S) = \E\!\left[\frac{\#\{j\in S: Z_{N+j}=1\}}{1\vee\#\{j:Z_{N+j}=1\}}\right],
\end{equation}
where $a\vee b=\max\{a,b\}$.  The denominator convention makes the ratio well
defined when the relevant set is empty.

\subsection{Conformal Selection}

Throughout the main text we use an upper-tail convention.  A larger score means
stronger evidence that a candidate belongs to the target region.  Let
$A:\Xcal\times\Ycal\to\R$ be a favorable conformal-selection score and let
$c\notin\Ical$ be a null reference value.  Let $\cC$ denote the index set of the
calibration split, so $|\cC|=n_{\mathrm{cal}}$.  The conformal-selection p-value
for test candidate $j$ is
\begin{equation}
  \label{eq:pvalue_standard}
  p_j
  =
  \frac{1+\#\{i\in\cC: A_i \ge A(X_{N+j},c)\}}
       {|\cC|+1},
  \qquad
  A_i = A(X_i,Y_i).
\end{equation}
A small p-value means that the test candidate has an unusually large favorable
score relative to the calibration examples.  A multiple-testing procedure such
as Benjamini--Hochberg (BH) \citep{BenjaminiY95jrsssb} or
Benjamini--Yekutieli (BY) \citep{BenjaminiY2001aos} then produces the selected
set.

Some conformal-selection papers use a lower-tail nonconformity score $V$, where
small values are favorable.  This is equivalent to the present convention after
the sign transformation $A=-V$.  We keep the upper-tail convention because NCCS
is based on a membership score $g(x)\approx\Prob(Y\in\Ical\mid X=x)$, where
larger values are naturally more favorable.

The conformal p-values used by cfBH rely on two score-level requirements
\citep{JinY2023jmlr}.  FDR control also depends on the multiple-testing
procedure and the dependence structure of the p-values, as discussed in
Section~\ref{sec:pvalue}.
\begin{assumption}[Score independence]
  \label{ass:score_ind}
  The score is constructed using only data independent of $\Dcal_\mathrm{cal}$.
\end{assumption}
\begin{assumption}[Regional monotonicity]
  \label{ass:mono}
  For the upper-tail favorable score $A$, target-side responses are shifted
  downward relative to null-side responses in the wrapped calibration score.
  For all $x\in\Xcal$, $A(x,y)\le A(x,y')$ whenever $y\in\Ical$ and
  $y'\notin\Ical$.
\end{assumption}
Regional monotonicity prevents target-side calibration scores from entering the
upper-tail count when the test score is evaluated at the null reference
$c\notin\Ical$.  In the lower-tail nonconformity convention $V=-A$, the
inequality is reversed.  This is the standard conformal-selection mechanism for
preventing target calibration points from producing artificially small p-values
for null test candidates.  NCCS takes a simpler route by calibrating the
membership score directly against confirmed null calibration examples.

\subsection{AUROC and Power}

The asymptotic power formula of \citet[Proposition~7]{JinY2023jmlr}
shows that, in large samples, power is determined mainly by how well the score
separates target examples from non-target examples.  Under our upper-tail
convention, power is the fraction of target candidates whose scores exceed the
population operating threshold:
\begin{equation}
  \label{eq:asymp_power}
  \Power \;\to\; \frac{\Prob(A(X,c)\ge\tau^*,\, Z=1)}{\Prob(Z=1)} .
\end{equation}
Thus power is controlled by the ranking induced by the score, not by the
numerical scale of the p-values themselves.  This motivates learning a score
whose ranking separates $Z=1$ candidates from $Z=0$ candidates as well as
possible.


\section{The Target-Membership Principle}
\label{sec:method}

This section develops the principle.  We first identify the target-membership
probability as the oracle selection ranking (Section~\ref{sec:oracle}).  We then
characterize when conventional prediction-oriented scores realize this ranking and
when they fail (Section~\ref{sec:alignment}).  Finally we describe how a learned
membership score is calibrated into a selection rule, with the null-calibrated
conformal route developed in detail (Sections~\ref{sec:pvalue}
and~\ref{sec:algorithm}).

\subsection{Target-Membership Scores and Oracle Ranking}
\label{sec:oracle}

The target event induces the binary label
\[
  Z=\ind\{Y\in\Ical\}.
\]
A powerful score for selection should rank candidates by their probability of
having $Z=1$.  The population target-membership score is
\[
  \eta(x)=\Prob(Z=1\mid X=x)=\Prob(Y\in\Ical\mid X=x).
\]
Large values of $\eta(x)$ mean that the candidate is likely to satisfy the
target condition.  We write $\pi_1=\Prob(Z=1)$ and $\pi_0=\Prob(Z=0)$ for
the marginal probabilities of target and non-target membership, respectively.

The role of $\eta$ follows from the Neyman--Pearson lemma.  The likelihood ratio for
distinguishing target covariates from non-target covariates is
\[
  \Lambda(x) = \frac{dP_{X\mid Z=1}}{dP_{X\mid Z=0}}(x)
             = \frac{\eta(x)}{1-\eta(x)}\cdot\frac{\pi_0}{\pi_1}.
\]
Thus $\Lambda(x)$ is a strictly increasing function of $\eta(x)$.
This monotonicity has a useful Neyman--Pearson interpretation.
For any selection region $A\subseteq\Xcal$, the quantity
$\Prob(X\in A\mid Z=1)$ is the probability of selecting a target candidate,
whereas $\Prob(X\in A\mid Z=0)$ is the probability of selecting a non-target candidate.
If the latter quantity is constrained to be at most a fixed level $\alpha_0$,
the Neyman--Pearson lemma shows that thresholding the likelihood ratio $\Lambda(x)$
maximizes the former quantity.  Because $\Lambda(x)$ is a strictly increasing function of $\eta(x)$,
thresholding the target-membership probability gives the same population-optimal ranking.
Hence the oracle membership ranking is optimal pointwise along the ROC curve,
rather than only after averaging performance through AUROC.

\begin{proposition}[Oracle ranking at fixed null selection level]
  \label{prop:oracle}
  Fix any $\alpha_0\in[0,1]$.  Among all covariate-based selection regions
  $A\subseteq\Xcal$ satisfying
  \[
    \Prob(X\in A\mid Z=0)\le \alpha_0,
  \]
  the region that maximizes the target selection probability
  \[
    \Prob(X\in A\mid Z=1)
  \]
  is an upper level set of the likelihood ratio
  $dP_{X\mid Z=1}/dP_{X\mid Z=0}$.  Since this likelihood ratio is a strictly
  increasing function of the target-membership probability
  $\eta(x)=\Prob(Z=1\mid X=x)$, the same optimal regions are obtained by
  selecting large values of $\eta(x)$.  Equivalently, any strictly increasing
  transformation of $\eta$ gives the same oracle ranking.
\end{proposition}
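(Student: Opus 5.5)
The argument reduces the statement to the classical Neyman--Pearson lemma applied to the pair of covariate laws $P_0=P_{X\mid Z=0}$ and $P_1=P_{X\mid Z=1}$. Take the dominating measure $\mu=P_X$ and write $p_0=dP_0/d\mu=(1-\eta)/\pi_0$ and $p_1=dP_1/d\mu=\eta/\pi1$. These identities follow from Bayes' rule, since $P(Z=1, X\in B)=\int_B\eta\,dP_X$. We assume $0<\pi_1<1$ so the conditionals are defined. The likelihood ratio $\Lambda=p_1/p_0=\frac{\eta}{1-\eta}\cdot\frac{\pi_0}{\pi_1}$ then takes values in $[0,\infty]$, with the conventions $\Lambda=\infty$ where $\eta=1$ and $\Lambda=0$ where $\eta=0$. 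The map $t\mapsto \frac{t}{1-t}\frac{\pi_0}{\pi_1}$ is strictly increasing on $[0,1]$. Hence the upper level sets $\{\Lambda>k\}$ and $\{\Lambda\ge k\}$ coincide with $\{\eta>t\}$ and $\{\eta\ge t\}$ for $t=k\pi_1/(k\pi_1+\pi_0)$.

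The core step is the standard Neyman--Pearson comparison. Fix $\alpha_0$ and choose $k\in[0,\infty]$ and a region $A^*$ with $\{\Lambda>k\}\subseteq A^*\subseteq\{\Lambda\ge k\}$ and $P_0(A^*)=\alpha_0$. Let $A$ be any region with $P_0(A)\le\alpha_0$. Since $(\ind_{A^*}-\ind_A)(p_1-kp_0)\ge0$ pointwise, integrating against $\mu$ gives
\[
P_1(A^*)-P_1(A)\ge k\{P_0(A^*)-P_0(A)\}\ge 0.
\]
This shows $A^*$ is optimal. When $k=\infty$, take $A^*=\{\Lambda=\infty\}=\{\eta=1\}$; it has $P_0$-measure zero and captures all of $P_1$'s singular part, and the inequality is checked directly. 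Conversely, if $A$ is also optimal, equality forces $\ind_{A^*}=\ind_A$ $\mu$-a.e.\ off the boundary $\{\Lambda=k\}$. So every optimal region is, up to null sets, an upper level set of $\Lambda$, and therefore of $\eta$.

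The step I expect to be delicate is the existence of $k$ and $A^*$ with $P_0(A^*)=\alpha_0$ exactly. The function $k\mapsto P_0(\Lambda>k)$ is right-continuous and nonincreasing, so I set $k=\inf\{k:P_0(\Lambda>k)\le\alpha_0\}$. If $P_0(\Lambda=k)$ has an atom, the exact level needs part of the boundary set $\{\Lambda=k\}$. With deterministic covariate-based regions this requires $P_0$ restricted to the boundary to be atomless, or else randomization at the boundary. I would handle this by saying the optimal region lies between the strict and weak upper level sets, which is what ``upper level set'' means in the proposition. I would also note that if $P_0(\Lambda\ge k)<\alpha_0$ is forced by atoms, the weak level set is already optimal among deterministic regions by the same integral inequality, since any competitor is compared against $k$ times its slack.

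The final claim is immediate. For any strictly increasing $h$, $\{h(\eta)\ge h(t)\}=\{\eta\ge t\}$, and likewise for strict inequalities. So $h\circ\eta$ generates the same family of level sets and the same oracle ranking.
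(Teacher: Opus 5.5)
Your main line is the paper's argument made self-contained. The paper cites the Neyman--Pearson lemma ``with the usual randomized boundary convention if needed'' and then uses Bayes' rule and monotonicity of $u\mapsto u/(1-u)$. You instead prove the lemma from the pointwise inequality $(\ind_{A^*}-\ind_A)(p_1-kp_0)\ge 0$ with $\mu=P_X$, which also handles $\eta\in\{0,1\}$ and gives the a.e.\ uniqueness direction. All of that is correct whenever a level-set region of exact size $\alpha_0$ exists.

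The step that fails is your fallback for atoms: ``if $P_0(\Lambda\ge k)<\alpha_0$ is forced by atoms, the weak level set is already optimal among deterministic regions by the same integral inequality.''

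First, with $k=\inf\{k:P_0(\Lambda>k)\le\alpha_0\}$, letting $k'\uparrow k$ shows $P_0(\Lambda\ge k)\ge\alpha_0$. So the case that actually arises is $P_0(\Lambda>k)<\alpha_0<P_0(\Lambda\ge k)$, where the weak level set is infeasible.

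Second, take any region $A^*$ between the strict and weak level sets with $P_0(A^*)<\alpha_0$. The integral inequality only gives $P_1(A^*)-P_1(A)\ge k\{P_0(A^*)-P_0(A)\}$. For $k>0$ the right-hand side is negative for any competitor with $P_0(A)>P_0(A^*)$, so it certifies nothing.

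In fact no deterministic patch can work, because the statement is false for deterministic regions with atoms. Take $\Xcal=\{a,b\}$ with $P_0=(0.6,0.4)$, $P_1=(0.9,0.1)$ and $\alpha_0=0.5$. Then $\Lambda(a)=1.5>\Lambda(b)=0.25$, and the only feasible upper level set is $\emptyset$, with target probability $0$. Yet $\{b\}$ is feasible with target probability $0.1$.

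So delete that sentence and keep only your other two repairs, which are precisely the convention the paper's proof invokes:
\begin{enumerate}
\item Assume $P_0$ restricted to $\{\Lambda=k\}$ is atomless, so some $B\subseteq\{\Lambda=k\}$ gives $P_0(\{\Lambda>k\}\cup B)=\alpha_0$.
\item Pass to randomized rules $\phi:\Xcal\to[0,1]$ equal to $1$ on $\{\Lambda>k\}$, $0$ on $\{\Lambda<k\}$, and a constant $\gamma$ on $\{\Lambda=k\}$ tuned so that $\int\phi\,dP_0=\alpha_0$. Your integral inequality then goes through verbatim with $\phi$ in place of $\ind_{A^*}$.
\end{enumerate}
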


\begin{proof}
  See Appendix~\ref{proof:oracle}.
\end{proof}

Proposition~\ref{prop:oracle} is a fixed-level Neyman--Pearson statement.  It
says that, if we fix the allowed probability of selecting a non-target
candidate, then thresholding the oracle membership score maximizes the
probability of selecting a target candidate.  A BH rule does not fix this
operating level in advance.  Instead, it chooses a data-dependent threshold from
the conformal p-values.  Nevertheless, the proposition identifies the population
ranking that one would like the p-values to reflect.  Section~\ref{sec:theory}
then uses a local stability condition to relate perturbations of this oracle
ranking to perturbations of BH power.

\subsection{When Conventional Scores Suffice, and When They Fail}
\label{sec:alignment}

Proposition~\ref{prop:oracle} identifies $\eta_\Ical$ as the oracle ranking, but
it does not by itself argue that one must learn $\eta_\Ical$ rather than reuse a
prediction-oriented score.  Whether a conventional score suffices depends on the
geometry of the target region.  This subsection makes the boundary precise.

Let $s:\Xcal\to\R$ be any covariate score, and say that $s$ is
\emph{selection-aligned} with the target $\Ical$ if it induces the oracle
ranking, that is, if there is a strictly increasing function $\psi$ with
$s(x)=\psi(\eta_\Ical(x))$ for $P_X$-almost every $x$.  Two scores that are
strictly increasing transformations of each other produce identical conformal
p-values in \eqref{eq:null_pvalue} and hence identical selected sets, because the
p-value depends on the score only through its induced order.  Alignment is
therefore the relevant equivalence.

The first observation is that for an important class of targets the conventional
scores are already aligned, so the principle changes nothing.

\begin{proposition}[Mean-monotone targets: equivalence]
\label{prop:alignment_mono}
Suppose the conditional law of $Y\mid X=x$ has the form
$Y=\mu(x)+\sigma\,\varepsilon$ with $\varepsilon\perp X$, fixed scale
$\sigma>0$, and continuous error CDF $F_\varepsilon$.  Let the target be the
one-sided region $\Ical=[c,\infty)$.  Then
\[
  \eta_\Ical(x)
  =\Prob(Y\ge c\mid X=x)
  =1-F_\varepsilon\!\left(\tfrac{c-\mu(x)}{\sigma}\right),
\]
which is a strictly increasing function of $\mu(x)$.  Consequently the conditional
mean score $s_{\mathrm{mean}}(x)=\mu(x)$, the signed residual score, and the
likelihood-ratio score of \citet{QinJ2025arxiv} are all selection-aligned with
$\Ical$, and all induce the same NCCS selected set as $\eta_\Ical$.
\end{proposition}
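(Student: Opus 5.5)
We first compute $\eta_\Ical$ directly from the location--scale model and then read off alignment of each score. Since $\varepsilon\perp X$,
\[
\Prob(Y\ge c\mid X=x)=\Prob\bigl(\varepsilon\ge (c-\mu(x))/\sigma\bigr)=1-F_\varepsilon\bigl((c-\mu(x))/\sigma\bigr),
\]
where we use continuity of $F_\varepsilon$ to identify $\Prob(\varepsilon\ge t)$ with $1-F_\varepsilon(t)$. The map $u\mapsto 1-F_\varepsilon((c-u)/\sigma)$ is nondecreasing in $u$ because $\sigma>0$ and $F_\varepsilon$ is nondecreasing. So $\eta_\Ical(x)=\phi(\mu(x))$ for a nondecreasing $\phi$.

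The main obstacle is that strict monotonicity of $\phi$ requires $F_\varepsilon$ to be strictly increasing on the relevant range, which continuity alone does not give (the CDF may have flat stretches). We handle this by restricting attention to the range of $(c-\mu(X))/\sigma$. One option is to assume, as is implicit in the statement, that $F_\varepsilon$ is strictly increasing on its support and that this range lies in the support. The other option is to note that on flat stretches $\phi$ is constant. In that case the mean score is finer than $\eta_\Ical$, and the claimed equivalence of selected sets must be understood up to ties. We would state the first option and remark on the second. Under strict increase, $\phi$ is strictly increasing, so $\mu(x)=\phi^{-1}(\eta_\Ical(x))$ and $s_{\mathrm{mean}}$ is selection-aligned in the sense just defined.

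Next we treat the other scores. The signed residual score evaluated at the reference $c$ has the form $A(x,c)=\mu(x)-c$ (or $(\mu(x)-c)/\sigma$). This is a strictly increasing affine transform of $\mu(x)$, hence of $\eta_\Ical(x)$. For the likelihood-ratio score $R(x;c)=(1-\eta_\Ical(x))/\eta_\Ical(x)$, the map $t\mapsto(1-t)/t$ is strictly decreasing on $(0,1]$. Taking $-R$, or equivalently selecting small $R$ as in Section~\ref{sec:related}, gives a strictly increasing transform of $\eta_\Ical$ wherever $\eta_\Ical>0$. Where $\eta_\Ical=0$ we have $R=+\infty$, and the ordering is still preserved in the extended reals. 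Since compositions of strictly increasing maps are strictly increasing, all of these scores are selection-aligned.

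Finally, we invoke the observation preceding the proposition. The null-calibrated p-value \eqref{eq:null_pvalue} depends on the score only through the indicators $\ind\{g(X_i)\ge g(X_{N+j})\}$, and these are invariant under strictly increasing transforms. Hence all p-values coincide, and BH or BY therefore returns the same selected set. The $P_X$-a.e. qualifier means the sets agree almost surely, because the relevant covariates are drawn from $P_X$.
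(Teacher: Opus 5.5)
Your proposal is correct and follows the paper's proof essentially step for step. Both arguments compute $\eta_\Ical$ from the location--scale model and read off monotonicity in $\mu(x)$, treat the signed residual as an increasing transform of $\mu$ and $R(x;c)$ as the decreasing map $(1-\eta_\Ical)/\eta_\Ical$, and conclude from the order-invariance of the null-calibrated p-value \eqref{eq:null_pvalue}.

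Your explicit assumption that $F_\varepsilon$ is strictly increasing on the range of $(c-\mu(X))/\sigma$ is exactly what the paper's proof tacitly uses when it says ``strictly increasing on the support where $F_\varepsilon$ is strictly increasing.'' That assumption is genuinely needed, not just a tie-handling convenience: on flat stretches of $F_\varepsilon$ the $\mu$-ranking breaks ties that $\eta_\Ical$ does not, and the NCCS selected sets can then differ outright.
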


\begin{proof}
See Appendix~\ref{proof:alignment_mono}.
\end{proof}

The proposition explains why residual and clipped-residual scores work well on
one-sided exceedance targets, and why direct Neyman--Pearson selection, cfBH, and
membership scoring coincide there at the population level.  In this regime the
membership principle is a reinterpretation rather than a new method, and we expect
no empirical gap.  Our upper-tail experiment in Section~\ref{sec:exp-upper}
confirms this.

The principle becomes substantive precisely when membership is not monotone in a
single conditional mean.  Three common target geometries break alignment.

\emph{Interval and variance-driven targets.}  Let
$Y=\mu(x)+\sigma(x)\varepsilon$ and let the target be an interval
$\Ical=[c_1,c_2]$.  Then
\[
  \eta_\Ical(x)
  =F_\varepsilon\!\left(\tfrac{c_2-\mu(x)}{\sigma(x)}\right)
  -F_\varepsilon\!\left(\tfrac{c_1-\mu(x)}{\sigma(x)}\right),
\]
which is \emph{not} monotone in $\mu(x)$: membership increases as $\mu(x)$ moves
toward the interval center and decreases as it moves away, and it depends on the
conditional scale $\sigma(x)$.  In the purely variance-driven case
$\mu(x)\equiv 0$ with symmetric errors, $\eta_\Ical(x)$ is a function of
$\sigma(x)$ alone and is strictly decreasing in it, so the conditional mean score
carries \emph{no} information about target membership.  Any score that is a
function of $\mu(x)$ alone is then misaligned, and the gap is maximal.

\emph{Multimodal targets.}  If $Y\mid X=x$ is multimodal, an interval covering
one mode has membership probability governed by the mass of that mode, which need
not be monotone in any summary location statistic.  A residual or quantile score
calibrated to the global conditional distribution does not track the relevant
modal mass.

\emph{Multi-condition targets.}  If the target is a conjunction
$\Ical=\{y:y^{(1)}\in\Ical_1,\ldots,y^{(K)}\in\Ical_K\}$ for vector responses, the
membership probability is a joint event probability, and no single prediction
direction represents all coordinates simultaneously.

These cases share a structural cause.  Prediction-oriented scores summarize the
conditional distribution through a location functional, whereas target membership
is a functional of the conditional distribution over the set $\Ical$.  When the
target is a monotone level set of the location functional, the two agree
(Proposition~\ref{prop:alignment_mono}); otherwise they need not.  The
target-membership score is the canonical repair because it depends on the target
only through the event $Y\in\Ical$ and therefore remains aligned by construction
for any $\Ical$.  This is the content of the principle, and it is also a concrete,
testable prediction: membership scoring should be indistinguishable from
conventional scores on mean-monotone targets and strictly better when alignment
fails.  Section~\ref{sec:experiments} tests both halves.

The principle has two axes that organize the rest of the paper, and Figure~\ref{fig:grid} places the three methods we compare on them.  The vertical axis is the choice of score, the subject of the alignment characterization just given.  The horizontal axis is the calibration mechanism, the subject of the finite-sample validity results in Section~\ref{sec:theory}.  Reading off the grid, cfBH and NCCS share a column: both calibrate through null-referenced conformal p-values and differ only in the score.  NCCS and direct Neyman--Pearson selection share a row: both use the membership score $\eta_\Ical$ and differ only in calibration.  The next subsections develop the conformal calibration column in detail.

\begin{figure}[ht!]
\centering
\begin{tikzpicture}[
    >=Latex,
    font=\small,
    cell/.style={
        draw,
        rounded corners,
        align=center,
        minimum height=20mm,
        minimum width=46mm,
        inner sep=4pt
    },
    emptycell/.style={
        draw,
        dashed,
        rounded corners,
        align=center,
        minimum height=20mm,
        minimum width=46mm,
        inner sep=4pt,
        text=black!55
    },
    collab/.style={align=center, font=\small},
    rowlab/.style={align=center, font=\small}
]

\node[cell, fill=black!5]  (cfbh) at (0,0)    {\textbf{cfBH}\\[2pt]{\footnotesize residual or clipped score,}\\[-1pt]{\footnotesize conformal calibration}};
\node[cell, fill=black!10] (nccs) at (0,-2.8) {\textbf{NCCS} (this paper)\\[2pt]{\footnotesize membership score,}\\[-1pt]{\footnotesize conformal calibration}};
\node[cell, fill=black!5]  (qin)  at (6.4,-2.8) {\textbf{Qin-NP}\\[2pt]{\footnotesize membership score,}\\[-1pt]{\footnotesize direct threshold}};

\node[collab] at (3.2,2.55) {\textbf{Calibration mechanism}};
\node[collab] at (0,1.55)   {Conformal p-values + BH\\[-1pt]{\footnotesize (finite-sample valid)}};
\node[collab] at (6.4,1.55) {Direct FDP threshold\\[-1pt]{\footnotesize (asymptotic)}};

\node[rowlab, text width=22mm] at (-3.7,0)    {Prediction-\\oriented score};
\node[rowlab, text width=22mm] at (-3.7,-2.8) {Membership\\score $\eta(x)$};
\node[rowlab, rotate=90] at (-5.1,-1.4) {\textbf{Score}};

\end{tikzpicture}
\caption{The target-membership principle (rows) and the calibration mechanism
(columns) place the three methods we compare.  cfBH and NCCS share a column:
both calibrate through null-referenced conformal p-values, and differ only in the
score.  NCCS and direct Neyman--Pearson selection (Qin-NP) share a row: both use
the membership score $\eta_\Ical$, and differ only in calibration.  The
prediction-score, direct-threshold cell has no standard method.  This figure
complements the feature comparison in Table~\ref{tab:positioning}.}
\label{fig:grid}
\end{figure}

A learned membership score $g$ follows the same upper-tail convention.  Larger
values of $g(x)$ indicate stronger evidence that $Y\in\Ical$.  Therefore, a
test candidate should receive a small p-value when its score $g(X_{N+j})$ is
unusually large relative to the scores of non-target calibration examples.

\subsection{Null-Calibrated Conformal P-Values}
\label{sec:pvalue}

The target-membership score gives a ranking, but it is not a p-value by itself.
To obtain a valid p-value for selection, NCCS calibrates the score against
calibration examples whose responses are observed to be outside the target
region.  Let
\[
  \Ccal_0=\{i\in\Dcal_\mathrm{cal}:Y_i\notin\Ical\},\qquad N_0=|\Ccal_0|.
\]
For a fixed membership score $g$, the null-calibrated conformal p-value for test
candidate $j$ is
\begin{equation}
  \label{eq:null_pvalue}
  p_j(g)
  =
  \frac{1+\#\{i\in\Ccal_0:g(X_i)\ge g(X_{N+j})\}}{N_0+1}.
\end{equation}
Small values of $p_j(g)$ mean that the test candidate has a larger
membership score than most observed non-target calibration examples.

This construction uses the observed non-target calibration subset as the
reference distribution for null candidates.  If test candidate $j$ is null,
meaning $Y_{N+j}\notin\Ical$, and the score $g$ is fixed independently of the
calibration and test points, then $g(X_{N+j})$ is exchangeable with
$\{g(X_i):i\in\Ccal_0\}$ under null exchangeability.  Hence the rank in
\eqref{eq:null_pvalue} is super-uniform for null candidates.  This is the
finite-sample validity property needed before applying a multiple-testing
procedure.

The rank construction is related to conformal p-values for outlier testing
\citep{BatesS2023aos} and to class-conditional conformal calibration
\citep{DingT2023neurips}.  NCCS uses the same reference-set idea for a different
purpose.  The reference set is the observed non-target calibration subset, and
the ranked score is learned to approximate the target-membership probability.

\begin{assumption}[Null exchangeability]
  \label{ass:null_exch}
  Conditional on $Y\notin\Ical$, the null calibration covariates
  $\{X_i:i\in\Ccal_0\}$ and any null test covariate
  $X_{N+j}\mid Y_{N+j}\notin\Ical$ are exchangeable.
\end{assumption}

\begin{theorem}[Null validity of the NCCS p-values]
  \label{thm:null_validity}
  Let $g$ be trained on $\Dcal_\mathrm{tr}\cup\Dcal_\mathrm{sc}$,
  independently of $\Dcal_\mathrm{cal}$.  Under Assumption~\ref{ass:null_exch},
  for any null test candidate $j$ and any $t\in[0,1]$:
  \begin{enumerate}[leftmargin=2em,itemsep=2pt,label=\normalfont(\alph*)]
    \item \emph{Conservative p-value.}
      \[
      \Prob\{p_j(g)\le t\mid Y_{N+j}\notin\Ical\}\le t.
      \]
    \item \emph{Exact p-value with randomized tie-breaking.}
      Let $U_j\sim\mathrm{Uniform}[0,1]$ and define
      \[
      \tilde p_j(g)=
      \frac{\#\{i\in\Ccal_0:g(X_i)>g(X_{N+j})\}
      +U_j\bigl(1+\#\{i\in\Ccal_0:g(X_i)=g(X_{N+j})\}\bigr)}{N_0+1}.
      \]
      Then $\Prob\{\tilde p_j(g)\le t\mid Y_{N+j}\notin\Ical,N_0\}=t$ for all $t\in[0,1]$.
  \end{enumerate}
\end{theorem}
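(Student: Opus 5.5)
The plan is to condition on everything that fixes the score and the size of the reference set, and then reduce both parts to the standard rank argument for exchangeable scalars. First condition on $\Dcal_\mathrm{tr}\cup\Dcal_\mathrm{sc}$. Since $g$ is trained only on these data and, by the split, they are independent of $\Dcal_\mathrm{cal}$ and of the test point, $g$ can be treated as a fixed measurable function. Next condition on the null labels $\{Z_i:i\in\Dcal_\mathrm{cal}\}$, which fixes the index set $\Ccal_0$ and $N_0=|\Ccal_0|$, and on the event $Y_{N+j}\notin\Ical$. Given all of this, Assumption~\ref{ass:null_exch} says that the $N_0+1$ covariates $X_i$, $i\in\Ccal_0$, together with $X_{N+j}$ are exchangeable. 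Applying the fixed map $g$ coordinatewise keeps exchangeability, so the scores $S_0,\dots,S_{N_0}$, with $S_0=g(X_{N+j})$ and the rest $g(X_i)$ for $i\in\Ccal_0$, form an exchangeable vector. If $N_0=0$, then $p_j=1$ and $\tilde p_j=U_j$, and both claims hold trivially.

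For part (a), write $p_j(g)=R/(N_0+1)$, where $R=\#\{k:S_k\ge S_0\}$ counts the test point itself. For each $k$ define $R_k=\#\{l:S_l\ge S_k\}$. Fix $t$ and let $K=\lfloor t(N_0+1)\rfloor$. The key deterministic claim is that at most $K$ indices $k$ satisfy $R_k\le K$. To see this, let $k_1,\dots$ be those indices ordered by decreasing score. An index with $R_k\le K$ has at most $K$ scores at least as large as its own, and all the other such indices with score $\ge S_k$ are among those. Counting this way shows there are at most $K$ such indices, and ties only make $R_k$ larger. Exchangeability then gives
\[
\Prob(R_0\le K)=\tfrac{1}{N_0+1}\,\E\Bigl[\textstyle\sum_k \ind\{R_k\le K\}\Bigr]\le \tfrac{K}{N_0+1}\le t.
\]
Integrating out the conditioning yields (a).

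For part (b), use the standard randomized-rank argument, taking $U_0=U_j$ independent of everything else. Conditionally on the multiset of scores, the position of $S_0$ is uniform over the $N_0+1$ slots by exchangeability. Suppose $S_0$ falls in a tie block of size $b$ with $a$ strictly larger scores. Then $\tilde p_j=(a+U_j b)/(N_0+1)$. By exchangeability, each of the $b$ members of the block is equally likely to be the test point, so averaging over them and over $U_j$ makes $\tilde p_j$ uniform on $[a/(N_0+1),(a+b)/(N_0+1)]$ with total mass $b/(N_0+1)$. The blocks tile $[0,1]$ as $a$ runs over the block boundaries, so the mixture is exactly $\mathrm{Uniform}[0,1]$, conditionally on $N_0$.

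The main obstacle is justifying the conditioning step rigorously: selecting the null calibration points by their labels must preserve exchangeability between the selected $X_i$ and a null test covariate. This needs i.i.d.\ sampling, so that given the labels the null $X$'s are i.i.d.\ from $P_{X\mid Z=0}$, or else it must be read directly from Assumption~\ref{ass:null_exch}. I would state this explicitly and lean on the assumption as written. The tie-handling counting in (a) is routine by comparison.
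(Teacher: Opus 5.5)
Your proof is correct. Its skeleton matches the paper's: condition on $\Dcal_\mathrm{tr}\cup\Dcal_\mathrm{sc}$ and the calibration labels, use Assumption~\ref{ass:null_exch} to get an exchangeable vector of $N_0+1$ scores, then argue about ranks. The genuine difference is how you obtain part (a).

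\textbf{The paper's route to (a).} It proves (b) first. It then derives (a) from the pointwise domination
\[
p_j(g)=\frac{1+B_j+E_j}{N_0+1}\ \ge\ \frac{B_j+U_j(1+E_j)}{N_0+1}=\tilde p_j(g),
\]
where $B_j$ and $E_j$ count null calibration scores strictly above and equal to the test score. So (a) is a one-line corollary of exactness.

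\textbf{Your route to (a).} You prove it directly from the deterministic fact that at most $K=\lfloor t(N_0+1)\rfloor$ of the $N_0+1$ scores can have upper rank $R_k\le K$, and then symmetrize. This is self-contained: it needs no auxiliary uniform variable and no tie analysis for the conservative claim. The paper's route is shorter given (b) and makes the relation between the two p-values explicit.

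Your wording of the counting step is loose. The clean version is: if $T=\{k:R_k\le K\}$ had more than $K$ elements, its member with the smallest score would have $R_k\ge |T|>K$, a contradiction.

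\textbf{Part (b).} Your tie-block mixture is the same idea as the paper's, but more complete. The paper only asserts that the tie-randomized rank is uniform. You check that each block of size $b$ contributes a uniform piece of mass $b/(N_0+1)$ on an interval of that length, and that these intervals tile $[0,1]$.

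\textbf{Label-based selection.} Your concern about selecting $\Ccal_0$ by labels is legitimate. The paper handles it as you propose, by taking the needed exchangeability directly from Assumption~\ref{ass:null_exch}.
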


\begin{proof}
  See Appendix~\ref{proof:superuniform}.
\end{proof}

\begin{corollary}[FDR control]
  \label{cor:null_fdr}
  Apply a multiple-testing procedure to the p-values in~\eqref{eq:null_pvalue}.
  Under Assumption~\ref{ass:score_ind} and Assumption~\ref{ass:null_exch}, the
  Benjamini--Yekutieli procedure controls $\FDR\le q$ under arbitrary dependence
  among p-values.  The Benjamini--Hochberg procedure controls $\FDR\le q$ when
  the joint null p-values satisfy positive regression dependence on subsets
  (PRDS) in the sense of \citet{BenjaminiY2001aos}.
\end{corollary}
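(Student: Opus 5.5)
The plan is to reduce the corollary to the two classical BH/BY theorems. The only paper-specific input is the marginal super-uniformity of Theorem~\ref{thm:null_validity}, applied conditionally on the score. Throughout, I condition on $\Dcal_\mathrm{tr}\cup\Dcal_\mathrm{sc}$. By Assumption~\ref{ass:score_ind}, $g$ is then a fixed function, independent of $\Dcal_\mathrm{cal}$ and the test points. I also condition on the test labels $\{Z_{N+j}\}_{j=1}^m$, so that the null index set $\mathcal H_0=\{j:Z_{N+j}=0\}$ is deterministic. Under Assumption~\ref{ass:null_exch}, Theorem~\ref{thm:null_validity}(a) gives, for each $j\in\mathcal H_0$ and each $t\in[0,1]$, $\Prob\{p_j(g)\le t\mid g,\mathcal H_0\}\le t$. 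Finally, I integrate out the conditioning, because $\FDR=\E[\E[\mathrm{FDP}\mid g,\mathcal H_0]]$ and any bound of $q$ that holds conditionally also holds after averaging.

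For BY, I will use the standard decomposition of the FDP. Write $R$ for the number of rejections and $q_k=qk/(m\,c_m)$ with $c_m=\sum_{k\le m}1/k$. Then
\[
\FDR=\sum_{j\in\mathcal H_0}\sum_{k=1}^m\frac1k\,\Prob\{p_j\le q_k,\ R=k\}.
\]
Define $\Prob\{p_j\le q_k,R=k\}=a_{jk}$. Swapping the order of summation and using only the marginal bound $\Prob\{p_j\le q_k\}\le q_k$, the double sum telescopes as in \citet[Theorem~1.3]{BenjaminiY2001aos}. This yields $\FDR\le |\mathcal H_0|q/m\le q$ with no assumption on the dependence among the p-values. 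The argument uses nothing beyond super-uniformity of each null p-value separately, which Theorem~\ref{thm:null_validity}(a) supplies. This part should be routine.

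For BH, the same decomposition is used, but each term is controlled by the PRDS property. Since $p_j$ is super-uniform and the joint null p-values are assumed PRDS on $\mathcal H_0$, the map $t\mapsto\Prob\{R\ge k\mid p_j\le t\}$ is nonincreasing in $t$ for each fixed $k$. Feeding this monotonicity into the Benjamini--Yekutieli (2001, Theorem~1.2) argument gives $\sum_k\frac1k\Prob\{p_j\le qk/m,R=k\}\le q/m$ for each null $j$. Summing over the nulls gives $\FDR\le q|\mathcal H_0|/m\le q$.

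The main obstacle is the bookkeeping of the conditioning. The PRDS hypothesis and the super-uniformity must hold under the same conditional law, namely given $g$ and the test labels. The null calibration count $N_0$ is random, and the null p-values are dependent through the shared set $\Ccal_0$. These are exactly why PRDS is assumed rather than proved. I will state explicitly that the PRDS hypothesis is taken conditionally on $(g,\mathcal H_0)$. I will then check that Theorem~\ref{thm:null_validity}(a) holds under this conditioning. This check works because conditioning on the other test labels does not change the joint law of $X_{N+j}$ and $\{X_i:i\in\Ccal_0\}$ under i.i.d.\ sampling. Marginalizing then completes the proof.
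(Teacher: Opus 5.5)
Your proposal is correct and is the same argument as the paper's. Theorem~\ref{thm:null_validity}(a) supplies marginal super-uniformity of each null p-value, and the corollary then follows by invoking the Benjamini--Yekutieli arbitrary-dependence theorem and the Benjamini--Hochberg PRDS theorem; your explicit conditioning on $(g,\mathcal H_0)$, the i.i.d.\ check that conditioning on the other test labels is harmless, and the sketched telescoping arguments are bookkeeping that the paper's short proof leaves implicit.
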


\begin{proof}
  See Appendix~\ref{proof:null_fdr}.
\end{proof}

\begin{remark}[Why BH requires an additional dependence condition]
  The null-calibrated p-values share the same null calibration set.
  Theorem~\ref{thm:null_validity} gives marginal super-uniformity for each null
  p-value, but it does not imply arbitrary-dependence FDR control for BH.
  Marginal super-uniformity is sufficient for the Benjamini--Yekutieli procedure
  under arbitrary dependence.  The more powerful Benjamini--Hochberg procedure
  requires an additional dependence condition, such as PRDS.
\end{remark}

Regional monotonicity is not required for the null-calibrated p-value in
\eqref{eq:null_pvalue}.  That condition is needed for standard cfBH p-values
based on full conformal ranks, where the score is evaluated at a null threshold
and compared with all calibration scores.  NCCS instead compares the raw
membership score only with observed non-target calibration scores.

\begin{figure}[ht!]
\centering
\resizebox{0.95\textwidth}{!}{%
\begin{tikzpicture}[
    >=Latex,
    font=\small,
    node distance=8mm and 10mm,
    box/.style={
        draw,
        rounded corners,
        align=center,
        minimum height=9mm,
        minimum width=33mm,
        inner sep=4pt
    },
    widebox/.style={
        draw,
        rounded corners,
        align=center,
        minimum height=10mm,
        minimum width=58mm,
        inner sep=5pt
    },
    arr/.style={-Latex, thick}
]

\node[widebox] (data) {Labeled source data\\
$\{(X_i,Y_i)\}_{i=1}^N$};

\node[widebox, below=of data, minimum width=70mm] (split) {Three-way split\\
$\mathcal D_{\rm tr},\ \mathcal D_{\rm sc},\ \mathcal D_{\rm cal}$};

\draw[arr] (data) -- (split);

\node[box, below left=14mm and 26mm of split] (dtr) {Training split\\
$\mathcal D_{\rm tr}$};

\node[box, below=14mm of split] (dscore) {Score-learning split\\
$\mathcal D_{\rm sc}$};

\node[box, below right=14mm and 26mm of split] (dcal) {Calibration split\\
$\mathcal D_{\rm cal}$};

\draw[arr] (split) -- (dtr);
\draw[arr] (split) -- (dscore);
\draw[arr] (split) -- (dcal);

\node[box, below=of dtr] (rep) {Optional predictor /\\
representation training};

\node[box, below=of dscore] (labels) {Membership labels\\
$Z_i=\mathbf 1\{Y_i\in\mathcal I\}$};

\node[widebox, below=14mm of labels] (learn) {Learn target-membership score\\
$\hat g(x)\approx \Prob(Y\in\mathcal I\mid X=x)$};

\draw[arr] (dtr) -- (rep);
\draw[arr] (dscore) -- (labels);
\draw[arr] (rep.east) -- ++(10mm,0) |- (learn.west);
\draw[arr] (labels) -- (learn);

\node[box, below=of dcal] (nullset) {Null calibration subset\\
$\mathcal C_0=\{i\in\mathcal D_{\rm cal}:Y_i\notin\mathcal I\}$};

\node[box, below=of nullset] (nullscores) {Null scores\\
$\{\hat g(X_i): i\in\mathcal C_0\}$};

\draw[arr] (dcal) -- (nullset);
\draw[arr] (nullset) -- (nullscores);
\draw[arr] (learn.east) -- ++(8mm,0) |- (nullscores.west);

\node[box, below=18mm of learn] (test) {Test candidates\\
$\{X_{N+j}\}_{j=1}^m$};

\node[box, below=of test] (testscores) {Candidate scores\\
$\hat g(X_{N+j})$};

\draw[arr] (learn) -- (test);
\draw[arr] (test) -- (testscores);

\node[widebox, below=16mm of testscores, minimum width=74mm] (pval) {Null-calibrated upper-tail p-values\\[1mm]
$\displaystyle
p_j=
\frac{
1+\sum_{i\in\mathcal C_0}
\mathbf 1\!\left\{\hat g(X_i)\ge \hat g(X_{N+j})\right\}
}{
|\mathcal C_0|+1
},
\qquad j=1,\dots,m
$};

\draw[arr] (testscores) -- (pval);

\draw[arr] (nullscores.south east) |- (pval.east);

\node[box, below=of pval, minimum width=50mm] (mt) {Apply BH};

\node[widebox, below=of mt, minimum width=64mm] (out) {Selected candidates\\
$\widehat{\mathcal S}=\{j:\text{candidate }j\text{ is selected}\}$};

\draw[arr] (pval) -- (mt);
\draw[arr] (mt) -- (out);

\end{tikzpicture}%
}
\caption{
Schematic illustration of \textbf{null-calibrated conformal selection (NCCS)}.
The labeled data are split into an optional training subset $\mathcal D_{\rm tr}$, a score-learning subset
$\mathcal D_{\rm sc}$, and a calibration subset $\mathcal D_{\rm cal}$.
A target-membership score $\hat g(x)\approx \Prob(Y\in\mathcal I\mid X=x)$ is learned from
membership labels $Z_i=\mathbf 1\{Y_i\in\mathcal I\}$, null calibration uses only
points with $Y_i\notin\mathcal I$, and test candidates are selected by applying
the Benjamini--Hochberg procedure to null-calibrated upper-tail p-values.
}
\label{fig:nccs-schematic}
\end{figure}

\subsection{Learning the Score and the NCCS Algorithm}
\label{sec:algorithm}

NCCS estimates the target-membership ranking by treating the target event as a
binary label.  On the score-learning split, define
\[
  Z_i=\ind\{Y_i\in\Ical\}.
\]
The score-learning task is then a binary classification problem with labels
$Z_i$.  A learned membership score $g(x)$ should be large for covariates that
are likely to satisfy the target condition and small for covariates that are
likely to be null, or non-target, examples.  This follows the upper-tail
convention introduced in Section~\ref{sec:oracle}.

The score must be learned independently of the calibration set used to compute
p-values.  We enforce this by a three-way split,
\[
  \Dcal=\Dcal_\mathrm{tr}\cup\Dcal_\mathrm{sc}\cup\Dcal_\mathrm{cal}.
\]
The predictor-training set $\Dcal_\mathrm{tr}$ is used to fit a base prediction
model when such a model is needed.  The score-learning set $\Dcal_\mathrm{sc}$
is used to learn $\wh g$.  The calibration set $\Dcal_\mathrm{cal}$ is reserved
for p-value computation.  This separation ensures that the learned score can be
treated as fixed when the calibration scores are ranked.

A natural way to learn $\wh g$ is a cost-sensitive logistic surrogate.  With the
convention that larger scores favor target membership, define
\begin{equation}
  \label{eq:mem_loss}
  \wh{\Lcal}_\mathrm{mem}(g)
  =
  \frac{1}{n_{\mathrm{sc}}^+}
  \sum_{i\in\Dcal_\mathrm{sc}:Z_i=1}\log(1+e^{-g(X_i)})
  +
  \frac{\lambda}{n_{\mathrm{sc}}^-}
  \sum_{i\in\Dcal_\mathrm{sc}:Z_i=0}\log(1+e^{g(X_i)}),
\end{equation}
where $n_{\mathrm{sc}}^+=|\{i\in\Dcal_\mathrm{sc}:Z_i=1\}|$,
$n_{\mathrm{sc}}^-=|\{i\in\Dcal_\mathrm{sc}:Z_i=0\}|$, and $\lambda>0$
controls the relative cost assigned to null examples.  The population minimizer
of this surrogate is
\begin{equation}
  \label{eq:h_lambda}
  h_\lambda(x)
  =
  \log\!\left\{
  \frac{\pi_0\eta(x)}{\lambda\pi_1(1-\eta(x))}
  \right\}.
\end{equation}
For each fixed $\lambda>0$, this function is strictly increasing in $\eta(x)$.
Thus the surrogate learns the same population ranking as the oracle
membership score.

One may also use a pairwise ranking loss,
\[
  \wh{\Lcal}_\mathrm{rank}(g)
  = \frac{1}{n_{\mathrm{sc}}^+n_{\mathrm{sc}}^-}
    \sum_{i\in\Dcal_\mathrm{sc}:Z_i=1}
    \sum_{l\in\Dcal_\mathrm{sc}:Z_l=0}
    \log\!\bigl(1+e^{g(X_l)-g(X_i)}\bigr),
\]
which directly encourages target examples to have larger scores than null
examples.  In our experiments, we use gradient-boosted trees as the membership
classifier because they handle nonlinear and heteroskedastic structure well.

After learning $\wh g$, NCCS forms the null calibration subset
\[
  \Ccal_0=\{i\in\Dcal_\mathrm{cal}:Y_i\notin\Ical\},
  \qquad
  N_0=|\Ccal_0|.
\]
For each test candidate, NCCS computes the upper-tail null-calibrated p-value in
\eqref{eq:null_pvalue}.  The final selected set is obtained by applying a
multiple-testing rule to these p-values.  In our implementation we use BH, with
BY as a conservative alternative when an arbitrary-dependence guarantee is
required.
Figure~\ref{fig:nccs-schematic} gives a schematic overview of NCCS, and
Algorithm~\ref{alg:nccs} gives the full step-by-step procedure.

\begin{algorithm}[t]
\caption{NCCS (Null-Calibrated Conformal Selection)}
\label{alg:nccs}
\begin{algorithmic}[1]
\Require Labeled data $\Dcal$, test covariates $\{X_{N+j}\}_{j=1}^m$, target region $\Ical$, FDR level $q\in(0,1)$.
\State Split $\Dcal$ into $\Dcal_\mathrm{tr}$, $\Dcal_\mathrm{sc}$, and $\Dcal_\mathrm{cal}$.
\State Use $\Dcal_\mathrm{tr}$ to fit any required base prediction model.
\State Form binary labels $Z_i=\ind\{Y_i\in\Ical\}$ for $i\in\Dcal_\mathrm{sc}$.
\State Learn a target-membership score $\wh g:\Xcal\to\R$ using \eqref{eq:mem_loss}, a ranking loss, or another binary classification method.
\State Form the null calibration subset
\[
  \Ccal_0=\{i\in\Dcal_\mathrm{cal}:Y_i\notin\Ical\},
  \qquad N_0=|\Ccal_0|.
\]
\For{$j=1,\ldots,m$}
\State Compute
\[
  p_j(\wh g)
  =
  \frac{1+\#\{i\in\Ccal_0:\wh g(X_i)\ge \wh g(X_{N+j})\}}{N_0+1}.
\]
\EndFor
\State Apply BH at level $q$ to $\{p_j(\wh g)\}_{j=1}^m$.
\State \textbf{return} the selected set $\widehat{\Scal}$.
\end{algorithmic}
\end{algorithm}

This construction shares its population oracle with the direct
Neyman--Pearson selection approach of \citet{QinJ2025arxiv}.  Both approaches
identify $\eta(x)=\Prob(Z=1\mid X=x)$, or equivalently the target-versus-null
likelihood ratio, as the fixed-level oracle ranking score.  The difference is
operational.  Qin et al. estimate a likelihood-ratio model and apply a direct
threshold rule, whereas NCCS uses the learned membership score through
null-calibrated conformal p-values followed by a multiple-testing step.

\begin{remark}[Split sizes and cross-fitting]
  The three-way split trades sample efficiency for a simple validity argument.
  A typical balanced choice is
  $n_\mathrm{tr}\approx n_\mathrm{sc}\approx n_\mathrm{cal}\approx N/3$,
  with adjustments depending on whether base prediction, score learning, or
  calibration requires more data.  Cross-fitting can improve sample efficiency,
  but it complicates the exact null-exchangeability argument because different
  calibration and test scores may be produced by models trained on different
  data.  We therefore use the three-way split as the default construction when
  exact finite-sample validity is the main requirement.
\end{remark}

The paper focuses on the single-target setting in the main theory and
experiments.  Extensions to multiple target regions or structured target
definitions can be handled by defining the corresponding membership labels and
calibrating each claim against an appropriate non-target set.


\section{Validity and Power Analysis}
\label{sec:theory}

Sections~\ref{sec:pvalue} and~\ref{sec:algorithm} established the validity side
of NCCS.  Under null exchangeability and independent score learning, the
null-calibrated conformal p-values are super-uniform for null candidates.  They
can therefore be combined with a multiple-testing procedure to control FDR under
the usual dependence conditions.  We now turn to power.  The goal is to explain
how the power of NCCS depends on the quality of the learned target-membership
score and on the finite size of the null calibration set.

The analysis separates two sources of error.  The first is the error from
learning a score that approximates the oracle target-membership ranking.  The
second is the error from estimating the null score distribution with finitely
many observed non-target calibration examples.  This separation is useful
because the two errors are controlled by different data splits.

\begin{definition}[Selection procedure $\Phi^{g,q}$]
\label{def:phi}
For a membership score $g:\Xcal\to\R$, let $\Phi^{g,q}$ denote the procedure
that computes the null-calibrated p-values
\[
  p_j(g)=\frac{1+\#\{i\in\Ccal_0:g(X_i)\ge g(X_{N+j})\}}{N_0+1}
\]
and applies a multiple-testing rule at level $q$.  In the power analysis below,
the rule is the Benjamini--Hochberg procedure unless stated otherwise.  We write
$\Power(\Phi^{g,q})$ and $\Power(g)$ interchangeably when $q$ and the data split
are clear from context.
\end{definition}

\subsection{Oracle Stability}

NCCS is built around a learned target-membership score.  The ideal score would
rank candidates in the same order as $\eta(x)=\Prob(Y\in\Ical\mid X=x)$.  We
denote such an oracle ranking score by $g^\star$.  The score $g^\star$ may be
any strictly increasing transformation of $\eta$, because the null-calibrated
p-values depend only on the ordering of the scores.

The main question is what happens when the learned score $\wh g$ is close to
$g^\star$, but not exactly equal to it.  If the score error is small and the
population selection boundary is stable, then replacing $g^\star$ by $\wh g$
should only slightly change the selected set and its power.  The following
assumption records the regularity needed for this conclusion.

\begin{assumption}[Local stability of the population BH boundary]
  \label{ass:margin}
  For $z\in\{0,1\}$, define the upper-tail score distribution
  \[
    \bar F_z^{g^\star}(\tau)
    =
    \Prob\{g^\star(X)\ge \tau\mid Z=z\}.
  \]
  Let $\tau^*_{g^\star}$ denote the population BH threshold induced by the
  oracle score $g^\star$.  The following conditions hold in a neighborhood of
  $\tau^*_{g^\star}$ and $g^\star$.
  \begin{enumerate}[label=\normalfont(\roman*),leftmargin=2em,itemsep=2pt]
    \item The null and target score distributions have bounded densities near
    $\tau^*_{g^\star}$.
    \item The population false discovery proportion curve
    \[
      \mathrm{FDP}_{g^\star}(\tau)
      =
      \frac{\pi_0 \bar F_0^{g^\star}(\tau)}
      {\pi_0 \bar F_0^{g^\star}(\tau)+\pi_1 \bar F_1^{g^\star}(\tau)}
    \]
    crosses the target level stably at $\tau^*_{g^\star}$.  In particular,
    \[
      \left|
      \frac{d}{d\tau}\mathrm{FDP}_{g^\star}(\tau)
      \bigg|_{\tau=\tau^*_{g^\star}}
      \right|
      \ge c_0>0 .
    \]
    \item The population power functional is locally Lipschitz with respect to
    score perturbations:
    \[
      |\mathcal P(g)-\mathcal P(g^\star)|
      \le
      C_L\|g-g^\star\|_{L^2(P_X)}
    \]
    for all scores $g$ sufficiently close to $g^\star$.
  \end{enumerate}
\end{assumption}

The first condition rules out a large pile of examples with scores almost equal
to the oracle threshold.  The second condition rules out an unstable boundary
where the population FDP curve only touches the target level and small
perturbations can move the threshold substantially.  The third condition states
the resulting local stability of power.  Together, these conditions exclude
pathological cases in which an arbitrarily small score perturbation can move a
large fraction of candidates across the selection boundary.

\begin{theorem}[Asymptotic oracle stability]
  \label{thm:asymp_optimal}
  Suppose $\wh g\to g^\star$ uniformly as $n_{\mathrm{sc}}\to\infty$, where
  $g^\star$ is a strictly increasing transformation of $\eta$.  Suppose also
  that Assumption~\ref{ass:margin} holds at $g^\star$ and that the number of
  test candidates $m$ is fixed.  Then
  \[
    \Power(\wh g)\to\Power(\Phi^{\eta,q})
  \]
  as $n_{\mathrm{sc}},n_{\mathrm{cal}}\to\infty$.
\end{theorem}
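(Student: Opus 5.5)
The plan is to compare, on a common probability space, the BH selection produced by $\wh g$ with the one produced by $g^\star$. Let $\Phi^{\eta,q}$ denote the procedure run with the oracle ranking score and the same calibration and test draws. Because the p-values in \eqref{eq:null_pvalue} depend on a score only through its order, $\Phi^{\eta,q}=\Phi^{g^\star,q}$, and it suffices to show
\[
\Power(\wh g)-\Power(g^\star)\to 0 .
\]
Since the power ratio is bounded in $[0,1]$, dominated convergence reduces this to showing that the selected sets $\widehat{\Scal}(\wh g)$ and $\widehat{\Scal}(g^\star)$ coincide with probability tending to one. This is a statement about the finitely many ($m$ fixed) test candidates. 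Power involves only the $m$ test indicators, so the joint law of the selected set determines it exactly.

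\textbf{Step 1: calibration error.} As $n_{\mathrm{cal}}\to\infty$ we have $N_0\to\infty$ almost surely, because $\pi_0>0$ in the nontrivial case. By Glivenko--Cantelli applied conditionally on the score, for any fixed score $g$ (and uniformly over the trained $\wh g$, since $\wh g$ is independent of $\Dcal_{\mathrm{cal}}$) we get
\[
\sup_\tau\Bigl|\tfrac{1}{N_0}\#\{i\in\Ccal_0:g(X_i)\ge\tau\}-\bar F_0^{g}(\tau)\Bigr|\to 0 .
\]
Hence $p_j(g)=\bar F_0^{g}(g(X_{N+j}))+o_P(1)$ for each $j$.

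\textbf{Step 2: score error.} Let $\delta_n=\sup_x|\wh g(x)-g^\star(x)|\to0$. Then $\bar F_0^{g^\star}(\tau+\delta_n)\le\bar F_0^{\wh g}(\tau)\le\bar F_0^{g^\star}(\tau-\delta_n)$. Assumption~\ref{ass:margin}(i), the bounded density, gives $\|\bar F_0^{\wh g}-\bar F_0^{g^\star}\|_\infty=O(\delta_n)$. Combined with $|\wh g(X_{N+j})-g^\star(X_{N+j})|\le\delta_n$, this yields
\[
\max_{j\le m}\bigl|p_j(\wh g)-p_j(g^\star)\bigr|\to0
\]
in probability.

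\textbf{Step 3: BH is stable at these p-values.} The BH output is a locally constant function of the p-vector away from the set where some $p_j$ equals a BH critical value $kq/m$ for $k\le m$. Away from that set, a small perturbation of the p-vector leaves the selected set unchanged. It therefore suffices to show that the limiting oracle p-values $\bar F_0^{g^\star}(g^\star(X_{N+j}))$ place no mass on the finitely many points $kq/m$. This holds because the score distribution has a density near the relevant thresholds (Assumption~\ref{ass:margin}(i)), so $\bar F_0^{g^\star}(g^\star(X))$ is continuously distributed there. The stable-crossing condition (ii) is used to guarantee that the data-dependent BH cutoff is a continuous function of the p-vector near the oracle operating point, with no tangential crossing that would let an $o(1)$ perturbation change the selection index. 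Consequently, with probability tending to one, both procedures select the same set. The test labels $Z_{N+j}$ are common to both, so the power difference tends to zero. Condition (iii) is only needed to obtain a rate, not the limit.

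\textbf{Main obstacle.} The difficulty is Step 3: making rigorous that the discrete BH step-up rule does not jump under vanishing perturbations. With $m$ fixed, BH compares the p-values to a fixed grid, so the key fact is the no-atom/no-tie property of the limiting p-values at the grid points $kq/m$. I would establish this first from the density condition. Then I would argue on the event $\{\min_{j,k}|p_j(g^\star)-kq/m|>\epsilon\}$, whose probability tends to one as $\epsilon\to0$, and on which the Step 1--2 perturbation is eventually below $\epsilon$.
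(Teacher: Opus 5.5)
Your proposal is correct, and at the decisive step it takes a different route from the paper. Both arguments begin the same way. Glivenko--Cantelli (you rightly apply it conditionally on the independently trained score) together with the density bound gives $\max_{j\le m}|p_j(\wh g)-P_j|\to0$ in probability, where $P_j=\bar F_0^{g^\star}(g^\star(X_{N+j}))$.

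The paper then invokes the stable-crossing condition (ii) to assert that the empirical BH operating threshold converges to the population threshold $\tau^*_{g^\star}$. It concludes by noting that the limiting boundary carries zero probability. You instead couple the $\wh g$ and $g^\star$ procedures on the same draws. You use the fact that, for fixed $m$, BH is locally constant in the $p$-vector off $\bigcup_{j,k}\{p_j=kq/m\}$. Since $P_j$ has no atoms at the grid points, both selected sets equal $\mathrm{BH}(P_1,\dots,P_m)$ with probability tending to one.

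Your route is the one that actually matches the fixed-$m$ hypothesis. With $m$ fixed, the BH cutoff $k^*q/m$ depends on the $m$ random test scores and does not converge to a deterministic population threshold. The paper's threshold-convergence step is therefore really a large-$m$ argument. Yours needs no crossing condition at all, so your appeal to (ii) in Step 3 is superfluous. It also shows that both powers converge to the power of $\mathrm{BH}(P_1,\dots,P_m)$, which settles the statement under either reading of $\Power(\Phi^{\eta,q})$.

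Two points need tightening. First, the regularity you use is not literally Assumption (i). Step 2's sup-norm bound needs $\bar F_0^{g^\star}$ continuous everywhere, and Step 3 needs $g^\star(X)$ to have no atoms at the levels $\tau_k$ with $\bar F_0^{g^\star}(\tau_k)=kq/m$, $k\le m$. Assumption (i) only controls densities near $\tau^*_{g^\star}$, so you should state a global continuity condition explicitly; the paper's proof tacitly uses the same strengthening.

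Second, in the final $\epsilon$-argument, define the good event through the limits, $\{\min_{j,k}|P_j-kq/m|>2\epsilon\}$, rather than through $p_j(g^\star)$, whose law changes with $n_{\mathrm{cal}}$. Then intersect it with $\{\max_j|p_j(\wh g)-P_j|<\epsilon,\ \max_j|p_j(g^\star)-P_j|<\epsilon\}$, which has probability tending to one for each fixed $\epsilon$.
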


\begin{proof}
  See Appendix~\ref{proof:asymp_optimal}.
\end{proof}

The theorem says that NCCS is stable to consistent score learning.  If the
learned score converges to an oracle ranking score, then the power of the
learned-score procedure converges to the power of the oracle procedure.  The
validity of NCCS comes from the null-calibrated conformal p-values, while its
power depends on how well the learned score approximates the oracle
target-membership ranking.

The fixed-$m$ condition keeps the statement simple.  The same idea can be
extended to growing test sets when the score approximation remains uniform over
the test pool and the null calibration error is controlled.  We keep the main
statement in the fixed-$m$ regime to emphasize the central point.  Small errors
in the learned membership ranking lead to small changes in BH power under a
stable population boundary.

\subsection{Power Loss Decomposition}

The asymptotic result says that the learned-score procedure approaches the
oracle procedure.  The next result gives the corresponding finite-sample
message.  The power loss has a score-estimation term and a null-calibration
term.

\begin{corollary}[Power loss decomposition]
  \label{cor:power_loss}
  Fix a distribution $P$ for which Assumption~\ref{ass:margin} holds at an
  oracle ranking $g^\star$.  The empirical procedure $\Phi^{\wh{g},q}$ satisfies
  \[
    \Power(\Phi^{g^\star,q},P) - \Power(\Phi^{\wh{g},q},P)
    \le C\|\wh{g}-g^\star\|_{L^2(P_X)} + O_p(N_0^{-1/2}).
  \]
  If $g^\star$ is a strictly increasing transformation of $\eta$, then
  $\Power(\Phi^{g^\star,q},P)=\Power(\Phi^{\eta,q},P)$.
\end{corollary}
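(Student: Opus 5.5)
The plan is to split the power gap through an intermediate population quantity. Let $\mathcal P(g)$ denote the population power functional of Assumption~\ref{ass:margin}(iii): the probability that a target candidate's score exceeds the population BH threshold $\tau^*_g$, where $\tau^*_g$ solves $\mathrm{FDP}_g(\tau)=q$. I would write
\[
  \Power(\Phi^{g^\star,q},P)-\Power(\Phi^{\wh g,q},P)
  =\bigl[\Power(\Phi^{g^\star,q},P)-\mathcal P(g^\star)\bigr]
  +\bigl[\mathcal P(g^\star)-\mathcal P(\wh g)\bigr]
  +\bigl[\mathcal P(\wh g)-\Power(\Phi^{\wh g,q},P)\bigr].
\]
The middle bracket is the score-estimation term. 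The first and third brackets are null-calibration terms, since each compares the finite-sample BH procedure with its population limit for a fixed score.

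\textbf{Score-estimation term.} Conditional on $\Dcal_\mathrm{tr}\cup\Dcal_\mathrm{sc}$, the score $\wh g$ is fixed and independent of $\Dcal_\mathrm{cal}$ (Assumption~\ref{ass:score_ind}). Assumption~\ref{ass:margin}(iii) then gives directly
\[
  |\mathcal P(\wh g)-\mathcal P(g^\star)|\le C_L\|\wh g-g^\star\|_{L^2(P_X)}
\]
whenever $\wh g$ lies in the stability neighborhood. Outside that neighborhood the power gap is at most $1$, which can be absorbed by enlarging $C$, or else the bound is stated on the event that $\wh g$ is close to $g^\star$.

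\textbf{Calibration terms.} For a fixed score $g$ I would make three observations.
\begin{enumerate}[leftmargin=2em,itemsep=2pt,label=\normalfont(\arabic*)]
  \item Scaling the p-value in \eqref{eq:null_pvalue} gives $(N_0+1)p_j(g)/N_0 \approx \wh{\bar F}_0^g(g(X_{N+j}))$, the empirical upper-tail CDF of the null calibration scores.
  \item By the DKW inequality, applied conditionally on $N_0$, we have $\sup_\tau|\wh{\bar F}_0^g(\tau)-\bar F_0^g(\tau)|=O_p(N_0^{-1/2})$.
  \item The BH threshold is the crossing point of an empirical FDP curve built from $\wh{\bar F}_0^g$ and the test-score distribution. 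Assumption~\ref{ass:margin}(ii) gives a slope of at least $c_0$ at the crossing, so an implicit-function or inverse-Lipschitz argument converts an $O_p(N_0^{-1/2})$ sup-perturbation of the curve into an $O_p(N_0^{-1/2})$ perturbation of the threshold.
\end{enumerate}
Assumption~\ref{ass:margin}(i) (bounded densities) then turns the threshold perturbation into an $O_p(N_0^{-1/2})$ change in the selection probability of target candidates, and hence in power. The same argument applies to $\wh g$, provided the slope and density bounds hold uniformly over the neighborhood of $g^\star$, which is what Assumption~\ref{ass:margin} asserts.

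\textbf{Final claim.} If $g^\star=\psi\circ\eta$ with $\psi$ strictly increasing, then $g^\star(X_i)\ge g^\star(X_{N+j})$ if and only if $\eta(X_i)\ge\eta(X_{N+j})$. The p-values are therefore identical pointwise, so the BH selections coincide and so do the powers. This is the ordering-invariance remark from Section~\ref{sec:alignment}.

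\textbf{Main obstacle.} The hardest step is the calibration terms. BH on a finite test set with fixed $m$ uses the empirical test-score distribution, not a population one. The honest $O_p$ rate therefore needs either $m\to\infty$ together with $N_0$, or a reinterpretation of $\mathcal P(g)$ as the fixed-$m$ expected BH power computed with the true null tail $\bar F_0^g$ in place of its estimate. I would try the latter first. Replacing the p-values by oracle p-values $\bar F_0^g(g(X_{N+j}))$ changes each p-value by $O_p(N_0^{-1/2})$. BH selections can change only if some oracle p-value lies within that distance of a BH cutoff $kq/m$. By the bounded-density condition, that event has probability $O(N_0^{-1/2})$ for each of the finitely many $j$ and $k$, which yields the stated $O_p(N_0^{-1/2})$ term.
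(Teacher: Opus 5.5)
Your proposal is correct at the paper's level of rigor and follows essentially the same route as the paper's proof. It uses Assumption~\ref{ass:margin}(iii) for the score-estimation term, DKW on the $N_0$ null calibration scores together with the stable-crossing condition for the $O_p(N_0^{-1/2})$ term, and rank invariance for the final equality.

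Your explicit telescoping through $\mathcal P(g^\star)$ and $\mathcal P(\wh g)$, the pathwise identity of the p-values under $g^\star=\psi\circ\eta$ (more direct than the paper's appeal to Proposition~\ref{prop:oracle}), and your fixed-$m$ caveat refine details the paper glosses over rather than change the argument — though your cutoff-perturbation fallback needs bounded densities of the oracle p-values near every cutoff $kq/m$, which is somewhat more than Assumption~\ref{ass:margin}(i) grants.
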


\begin{proof}
  See Appendix~\ref{proof:power_loss}.
\end{proof}

The term $\|\wh{g}-g^\star\|_{L^2(P_X)}$ is the score-estimation error.  It
depends on the score-learning split and measures how well the learned score
approximates the oracle membership ranking.  The term $O_p(N_0^{-1/2})$ is the
null-calibration error.  It remains even when the oracle score is known, because
the null score distribution is estimated from finitely many observed non-target
calibration examples.  Since $N_0\approx n_{\mathrm{cal}}\pi_0$, this term
shrinks as the calibration split grows, provided the marginal probability of the
null event is not too small.

This decomposition explains the modular structure of NCCS.  Score learning is
responsible for power because it determines how well target candidates are
ranked above null candidates.  Null calibration is responsible for p-value
validity because it estimates the score distribution under the null event
$Y\notin\Ical$.  The two parts interact through the final multiple-testing step,
but their statistical roles are distinct.

\subsection{Rate Consequences}
\label{sec:rate}

The decomposition has an immediate rate consequence.  If the learned score
converges to an oracle ranking function at rate $r_{n_{\mathrm{sc}}}$ in
$L^2(P_X)$, then the NCCS power loss is bounded by a score-estimation term of
order $r_{n_{\mathrm{sc}}}$ plus the null-calibration term of order
$n_{\mathrm{cal}}^{-1/2}$.  For example, under standard Sobolev smoothness
conditions and balanced sample splitting, a nonparametric estimator yields the
usual rate $n_{\mathrm{sc}}^{-s/(2s+d)}$, up to logarithmic factors
\citep{StoneCJ82aos,TsybakovAB2009book}.  We state this consequence formally in
Appendix~\ref{app:additional_theory}.


\section{Experiments}
\label{sec:experiments}

The experiments test the principle of Section~\ref{sec:method}, not the
superiority of a single method.  The alignment characterization makes two
falsifiable predictions.  First, on mean-monotone targets, membership scoring
should behave similarly to conventional prediction-oriented scores, because
Proposition~\ref{prop:alignment_mono} shows they induce the same ranking at the
population level.  Second, on targets where membership is not mean-monotone, such
as variance-driven intervals, membership-score methods should outperform
mean-score methods.  We test both predictions directly.

Separately, among the calibration routes for a membership score, we examine the
finite-sample validity property that distinguishes the conformal route.  We
compare null-calibrated BH against a direct empirical-FDP threshold that uses the
\emph{same} learned membership score, so that any difference reflects the
calibration mechanism rather than the score.

We therefore compare against clipped conformal BH (cfBH), a direct
Neyman--Pearson-style thresholding rule, mean-score null-calibrated BH, and oracle
versions that use the true membership probability available in the synthetic
experiments.  The first upper-tail experiment is a mean-monotone setting and tests
the first prediction.  The second interval experiment is variance-driven and tests
the second.  The final experiments vary the calibration budget and the target
frequency to quantify the finite-sample validity gap between null-calibrated BH
and direct empirical-FDP thresholding.

\subsection{Experimental Design}
\label{sec:expdesign}

We use two synthetic settings with known data-generating mechanisms.  In both
settings,
\[
  X\sim N(0,I_6),
  \qquad
  Y=\mu(X)+\sigma(X)\varepsilon,
  \qquad
  \varepsilon\sim N(0,1).
\]
The first setting is an upper-tail, mean-driven problem.  We set
\[
  \mu(x)=1.2\sin(\pi x_1)+0.8x_2-0.4x_3,
  \qquad
  \sigma(x)=0.45+0.25(1+e^{-2x_4})^{-1},
\]
and use the target region $\Ical=[0.5,\infty)$.  Here the conditional mean is
already highly informative for target membership, so this setting favors clipped
cfBH and direct thresholding.

The second setting is a variance-driven interval problem.  We set
\[
  \mu(x)=0.15\sin(x_1),
  \qquad
  \sigma(x)=0.10+1.45(1+e^{-2.6x_2})^{-1},
\]
and use the target region $\Ical=[-0.35,0.35]$.  In this setting, the
conditional mean carries little information about whether the response lies in
the interval.  The relevant signal is the conditional variance, so the experiment
tests whether learning $\eta_\Ical(x)=\Prob(Y\in\Ical\mid X=x)$ is useful beyond
mean-score selection.

Unless otherwise stated, all methods use
\[
  n_\mathrm{tr}=1000,
  \qquad
  n_\mathrm{score}=800,
  \qquad
  n_\mathrm{cal}=800,
  \qquad
  m=300,
\]
with target FDR level $q=0.10$.  The reported FDP and power are averages over
independent replications.  Power is the fraction of all target candidates in the
test pool that are selected.  The methods are as follows.
\begin{itemize}[leftmargin=1.8em,itemsep=2pt]
  \item \textbf{cfBH-clip} is the standard clipped cfBH baseline for the
  upper-tail target, using a gradient-boosted mean model.  Its clipped score has
  the form $M\ind\{Y\ge c\}-\widehat\mu(X)$ on calibration points and
  $-\widehat\mu(X)$ on test candidates.
  \item \textbf{cfBH-central} is an interval-specific transformed cfBH baseline.
  For $\Ical=[-a,a]$, we set $T(Y)=-|Y|$, so that $Y\in[-a,a]$ is equivalent to
  $T(Y)\ge -a$, and then apply the clipped cfBH construction to $T(Y)$.
  \item \textbf{Mean-Null-BH} is a null-calibrated BH procedure using a simple
  prediction-oriented score.  It uses a GBM mean score in the upper-tail setting
  and a centrality score based on the absolute mean in the interval setting.
  \item \textbf{Qin-NP} is a direct Neyman--Pearson-style empirical-FDP
  thresholding rule.  It uses the same learned target-membership score as NCCS,
  but chooses the score threshold using the labeled calibration sample to satisfy
  an empirical FDP constraint.
  \item \textbf{NCCS-BH} and \textbf{NCCS-BY} use the proposed null-calibrated
  conformal p-values with a GBM classifier trained on
  $Z_i=\ind\{Y_i\in\Ical\}$, followed by BH or BY.
  \item \textbf{Oracle-BH} is the same NCCS-BH procedure using the true
  membership probability $\eta_\Ical(x)$, which is available analytically in the
  synthetic experiments.
\end{itemize}
For all null-calibrated methods, larger scores are more favorable and p-values
are upper-tail null ranks as in~\eqref{eq:null_pvalue}.

\subsection{Upper-Tail Mean-Driven Target}
\label{sec:exp-upper}

\begin{table}[t]
\centering
\caption{Upper-tail mean-driven target $\Ical=[0.5,\infty)$.  Results are
averaged over 300 replications with $q=0.10$.  This setting is favorable to
clipped cfBH and direct Neyman--Pearson selection.}
\label{tab:upper-main}
\small
\begin{tabular}{lccc}
\toprule
Method & FDP & Power & Avg. selected \\
\midrule
  Oracle-BH & 0.061 & \textbf{0.583} & 68.6 \\
  cfBH-clip & 0.099 & 0.568 & 69.6 \\
  Qin-NP & 0.100 & 0.546 & 66.8 \\
  Mean-Null-BH & 0.062 & 0.441 & 52.1 \\
  NCCS-BH & 0.057 & 0.404 & 47.5 \\
  NCCS-BY & 0.002 & 0.016 & 2.0 \\
\bottomrule
\end{tabular}
\end{table}

This experiment tests the first prediction of the alignment characterization.
Because the target $\Ical=[0.5,\infty)$ is mean-monotone,
Proposition~\ref{prop:alignment_mono} says the membership ranking and the
mean-based ranking coincide, so membership scoring should match, not beat, the
specialized baselines.  Table~\ref{tab:upper-main} bears this out.  The
mean-aligned methods are strong: cfBH-clip achieves power $0.568$ at mean FDP
$0.099$, and Qin-NP achieves power $0.546$ at mean FDP $0.100$.  NCCS-BH is more
conservative, with mean FDP $0.057$ and power $0.404$.  The power gap here is a
calibration effect, not a score effect.  All of these methods use essentially the
same ranking, as the characterization predicts; the differences come from how
aggressively each calibration rule sets its threshold, with null-calibrated BH the
most conservative.  The takeaway is the intended one: on mean-monotone targets the
principle recovers existing practice and offers no power advantage.

NCCS-BY has almost no power in this experiment.  This is also expected.  BY
multiplies the BH threshold by $1/H_m$, where $H_m=\sum_{k=1}^m k^{-1}$.  At
$m=300$, this reduces the operative FDR level by a factor of about $6.3$.
Together with the discrete resolution $1/(N_0+1)$ of the null-calibrated
p-values, this leaves very few possible rejections.  We therefore view BY as a
conservative fallback under arbitrary dependence, while BH is the practical
procedure when the usual positive-dependence condition is plausible.

\subsection{Variance-Driven Interval Target}
\label{sec:exp-interval}

\begin{table}[t]
\centering
\caption{Variance-driven interval target $\Ical=[-0.35,0.35]$.  Results are
averaged over 300 replications with $q=0.10$.  cfBH-central applies clipped cfBH
to $T(Y)=-|Y|$, so that $Y\in[-a,a]$ becomes the one-sided event $T(Y)\ge -a$.}
\label{tab:interval-main}
\small
\begin{tabular}{lccc}
\toprule
Method & FDP & Power & Avg. selected \\
\midrule
  Qin-NP & 0.104 & \textbf{0.457} & 70.3 \\
  cfBH-central & 0.101 & 0.429 & 66.4 \\
  Oracle-BH & 0.054 & 0.354 & 51.9 \\
  NCCS-BH & 0.052 & 0.288 & 42.4 \\
  NCCS-BY & 0.000 & 0.001 & 0.2 \\
  Mean-Null-BH & 0.000 & 0.000 & 0.0 \\
\bottomrule
\end{tabular}
\end{table}

Table~\ref{tab:interval-main} shows the value and the limitation of the
membership-score approach.  Mean-Null-BH collapses because the conditional mean
is nearly uninformative about interval membership.  NCCS-BH attains power
$0.288$ with mean FDP $0.052$, compared with Oracle-BH power $0.354$ and mean
FDP $0.054$.  Thus the learned membership score tracks the oracle
null-calibrated benchmark reasonably well while retaining a conservative FDR
profile.

Oracle-BH should not be read as a global power upper bound.  It is an oracle only
within the NCCS null-calibrated-BH mechanism.  Qin-NP uses a different threshold
choice and operates close to the empirical FDP boundary, achieving power $0.457$
with mean FDP $0.104$.  The hand-engineered cfBH-central baseline is also very
strong.  After transforming the interval event into the one-sided event
$T(Y)=-|Y|\ge -a$, it achieves power $0.429$ with mean FDP $0.101$.  These
comparisons show that NCCS is most useful when a good target-specific
transformation is not obvious or when finite-sample null calibration is the main
requirement.

\subsection{Finite-Sample Validity at Small Calibration Sizes}
\label{sec:exp-validity}

The first two experiments show that direct Neyman--Pearson thresholding can be
more powerful than NCCS-BH when the membership score is accurate and the
calibration set is large.  This subsection isolates the complementary advantage
of NCCS-BH.  The null-calibrated p-values give a finite-sample null-p-value guarantee and FDR control under the conditions of
Corollary~\ref{cor:null_fdr}, whereas Qin-NP relies on a
direct empirical-FDP threshold.  Both methods use the same GBM membership
classifier, so the comparison isolates the calibration mechanism rather than the
score quality.

We use the variance-driven interval target $\Ical=[-0.35,0.35]$ from
Section~\ref{sec:exp-interval}, fix $n_\mathrm{score}=800$ and $m=300$, and vary
$n_\mathrm{cal}\in\{50,100,200,400,800\}$ at $q=0.10$ over 200 replications.
Table~\ref{tab:validity} reports the results.

\begin{table}[t]
\centering
\caption{Effect of the calibration size $n_\mathrm{cal}$ on finite-sample
validity.  The target is $\Ical=[-0.35,0.35]$, with $n_\mathrm{score}=800$,
$m=300$, $q=0.10$, and 200 replications.  Both methods use the same GBM
membership classifier.  Entries with mean FDP above the nominal level are marked
$^\ast$.}
\label{tab:validity}
\small
\begin{tabular}{rcccc}
\toprule
& \multicolumn{2}{c}{\textbf{NCCS-BH}} & \multicolumn{2}{c}{\textbf{Qin-NP}} \\
\cmidrule(lr){2-3}\cmidrule(lr){4-5}
$n_\mathrm{cal}$ & FDP & Power & FDP & Power \\
\midrule
$50$  & $0.004$ & $0.016$ & $0.109^\ast$ & $0.422$ \\
$100$ & $0.027$ & $0.141$ & $0.110^\ast$ & $0.460$ \\
$200$ & $0.033$ & $0.187$ & $0.103^\ast$ & $0.450$ \\
$400$ & $0.042$ & $0.240$ & $0.103^\ast$ & $0.459$ \\
$800$ & $0.049$ & $0.277$ & $0.104^\ast$ & $0.459$ \\
\bottomrule
\end{tabular}
\end{table}

Table~\ref{tab:validity} shows the expected validity-power trade-off.  NCCS-BH
keeps mean FDP below $q=0.10$ at every calibration size.  At very small
calibration sizes, however, it is conservative because the null-rank p-values
have coarse resolution.  At $n_\mathrm{cal}=50$, the smallest attainable p-value
is about $1/(N_0+1)$, and the BH threshold rarely admits discoveries.  Qin-NP is
much more powerful, but its mean FDP is slightly above the nominal level at all
calibration sizes.  This illustrates the difference between finite-sample
null-calibrated p-values and a direct threshold chosen from a finite calibration
sample.

The exceedance in Table~\ref{tab:validity} is modest because the interval target
has moderate frequency.  We next stress the problem by varying the target
frequency through the interval half-width.  We fix $n_\mathrm{score}=800$,
$n_\mathrm{cal}=300$, $m=300$, and $q=0.10$, and run 300 replications.  Table~\ref{tab:prevalence} reports the mean FDP, the exceedance frequency
$\Pr\{\mathrm{FDP}>q\}$ across replications, and power.

\begin{table}[t]
\centering
\caption{Validity as the target frequency $\pi_1$ varies in the variance-driven
interval setting.  We use $n_\mathrm{score}=800$, $n_\mathrm{cal}=300$, $m=300$,
$q=0.10$, and 300 replications.  Both methods use the same GBM membership
classifier.  ``Exc.'' is the fraction of replications with empirical
$\mathrm{FDP}>q$.  Entries with mean FDP above the nominal level are marked
$^\ast$.}
\label{tab:prevalence}
\small
\begin{tabular}{rc ccc ccc}
\toprule
& & \multicolumn{3}{c}{\textbf{NCCS-BH}} & \multicolumn{3}{c}{\textbf{Qin-NP}} \\
\cmidrule(lr){3-5}\cmidrule(lr){6-8}
$\pi_1$ & $\bar N_0$ & FDP & Exc. & Power & FDP & Exc. & Power \\
\midrule
$0.24$ & $227$ & $0.010$ & $0.02$ & $0.003$ & $0.170^\ast$ & $0.38$ & $0.028$ \\
$0.36$ & $191$ & $0.028$ & $0.14$ & $0.049$ & $0.127^\ast$ & $0.60$ & $0.211$ \\
$0.50$ & $150$ & $0.042$ & $0.06$ & $0.314$ & $0.103^\ast$ & $0.50$ & $0.516$ \\
$0.61$ & $116$ & $0.034$ & $0.02$ & $0.447$ & $0.101^\ast$ & $0.51$ & $0.625$ \\
$0.74$ & $\phantom{0}79$ & $0.024$ & $0.00$ & $0.492$ & $0.104^\ast$ & $0.54$ & $0.699$ \\
\bottomrule
\end{tabular}
\end{table}

Table~\ref{tab:prevalence} shows that NCCS-BH remains below the nominal mean FDP
level across all target frequencies.  Qin-NP remains more powerful, but its mean
FDP is above $q$ in every row.  The inflation is most pronounced when the target
event is rare.  At $\pi_1=0.24$, the mean FDP is $0.170$, a $70\%$ relative
overshoot of the nominal level.  This is a practically important regime because
selection problems are often most useful when target candidates are scarce.  The
price paid by NCCS-BH is conservative power, especially for rare targets and
small calibration sets.

Taken together, the experiments support the principle and locate the role of the
conformal calibration route.  The two predictions of the alignment
characterization both hold.  On the mean-monotone target, membership scoring
matches the mean-aligned baselines rather than beating them, consistent with the
equivalence in Proposition~\ref{prop:alignment_mono}.  On the variance-driven
interval target, mean-score selection collapses while membership-score methods,
including NCCS and direct thresholding, retain substantial power.  The empirical
gap appears where the characterization predicts it should, and is largely absent
where it predicts no gap.  Among the calibration routes for a membership score,
the conformal route is more conservative on power but is the one with
finite-sample null-valid p-values, and it keeps mean FDP below the nominal level
in the rare-target regime where the direct empirical-FDP threshold overshoots.  We therefore do not present
NCCS as a universal power winner.  Its place is as the finite-sample-valid
calibration route for the target-membership principle, most useful when no
target-specific transformation is obvious and when finite-sample null-calibrated
p-values matter more than spending the entire empirical FDP budget.

\section{Discussion and Conclusion}
\label{sec:discussion}

This paper argued for a principle: conformal selection is a classification
problem for the target event, so the natural score is the target-membership
probability $\eta_\Ical(x)=\Prob(Y\in\Ical\mid X=x)$, and thresholding any monotone
transform of it is the Neyman--Pearson oracle ranking.  The value of the principle
is sharpened by knowing its boundary.  For targets that are monotone level sets of
a conditional mean, membership scoring coincides with residual, clipped, and
likelihood-ratio scores and offers nothing new
(Proposition~\ref{prop:alignment_mono}).  For interval, variance-driven,
multimodal, and multi-condition targets, conventional scores are misaligned and
membership scoring is the canonical repair, because it depends on the target only
through the event $Y\in\Ical$.

A membership score can be calibrated into a selection rule in several ways, and we
studied this family rather than advocating a single method.  Null-Calibrated
Conformal Selection (NCCS) is the conformal route: it converts the membership
ranking into rank p-values against confirmed non-target examples and applies a
multiple-testing procedure.  Its distinguishing property is finite-sample FDR
control under null exchangeability.  Direct empirical-FDP thresholding can be more
powerful when the score is accurate, but controls FDR only asymptotically and, as
our experiments show, can exceed the nominal level when target candidates are
rare.  NCCS is therefore not a universal power winner; it is the member of the family
with a finite-sample null-validity guarantee.

The theory separates validity from power.  Null exchangeability and independent
score learning give finite-sample super-uniformity of the null-calibrated
p-values.  The Benjamini--Hochberg procedure is the practical version studied in
our experiments under standard positive-dependence conditions.  Power then
depends on the quality of the learned membership score, and our analysis relates
the loss from the oracle ranking to score estimation and null calibration errors.

Several limitations remain.  Membership-score learning can be difficult when the target event is rare, 
and distributional or Bayesian predictive models may be more sample efficient in such cases.  
The present theory also assumes null exchangeability, so the observed non-target calibration examples 
must represent the null test candidates.  Under covariate shift, label shift, or compound shift, 
this null reference distribution may change, and the rank calibration in NCCS may need to be modified by importance weights.  
This suggests a weighted version of NCCS, connected to recent conformal Bayes work 
on predictive tilting and weighted conformal calibration under distribution shift \citep{Choi2026eiml}.  
Developing weighted NCCS with finite-sample or robustness-style guarantees is left for future work.

In summary, the target-membership principle gives a single, transparent answer to
which score conformal selection should use, with a clean characterization of when
it matters.  NCCS realizes the principle through a calibration route whose strength
is a finite-sample validity guarantee for target-aware selection, rather than a
claim of universal power dominance.

\bibliographystyle{abbrvnat}
\bibliography{sjc}

\clearpage
\appendix

\section{Additional Theoretical Results}
\label{app:additional_theory}

\subsection{Rate Consequence for NCCS}
\label{app:nccs_rate}

The main text gives a qualitative rate implication of the power-loss
decomposition.  The following result states the corresponding formal consequence
when the learned membership score has a known $L^2(P_X)$ estimation rate.

\begin{theorem}[Rate of NCCS]
  \label{thm:nccs_rate}
  Fix a distribution $P$ for which Assumption~\ref{ass:margin} holds and the
  marginal probability of the null, or non-target, event is bounded away from
  zero.  Suppose the learned score $\wh g$ satisfies
  \[
    \|\wh g-g^\star\|_{L^2(P_X)}=O_p(r_{n_{\mathrm{sc}}})
  \]
  for the oracle ranking function
  \[
    g^\star(x)=h_\lambda(x)
    =
    \log\!\left\{
    \frac{\pi_0\eta(x)}{\lambda\pi_1(1-\eta(x))}
    \right\} .
  \]
  Then the NCCS power loss satisfies
  \[
    \Power(\Phi^{\eta,q},P) -
    \Power(\Phi^\mathrm{NCCS}_{n_{\mathrm{sc}},n_{\mathrm{cal}}},P)
    \le
    C_L r_{n_{\mathrm{sc}}} + O_p(n_{\mathrm{cal}}^{-1/2}).
  \]
  In particular, if $g^\star$ is $s$-smooth in $d$ dimensions and $\wh g$
  attains the classical nonparametric rate
  \[
    r_{n_{\mathrm{sc}}}
    =
    O_p\!\left(n_{\mathrm{sc}}^{-s/(2s+d)}\right),
  \]
  up to logarithmic factors \citep{StoneCJ82aos,TsybakovAB2009book}, and
  $n_{\mathrm{sc}}\asymp n_{\mathrm{cal}}$, then the power loss is
  $O_p(n_{\mathrm{sc}}^{-s/(2s+d)})$, up to logarithmic factors.
\end{theorem}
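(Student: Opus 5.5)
The argument applies Corollary~\ref{cor:power_loss} with the specific oracle ranking $g^\star=h_\lambda$ and then substitutes the assumed estimation rate. First, $g^\star=h_\lambda$ is admissible. By \eqref{eq:h_lambda}, $h_\lambda=\log\{\pi_0\eta/(\lambda\pi_1(1-\eta))\}$ is a composition of the logit with an additive constant, so it is a strictly increasing transformation of $\eta$. The null-calibrated p-values \eqref{eq:null_pvalue} depend on the score only through the order it induces, so $\Phi^{h_\lambda,q}$ and $\Phi^{\eta,q}$ produce identical selected sets on every realization. In particular $\Power(\Phi^{g^\star,q},P)=\Power(\Phi^{\eta,q},P)$, which is the last clause of Corollary~\ref{cor:power_loss}. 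Assumption~\ref{ass:margin} is imposed at this $g^\star$, and it is invariant under monotone reparametrization of the threshold $\tau$.

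Next, invoke Corollary~\ref{cor:power_loss}:
\[
\Power(\Phi^{\eta,q},P)-\Power(\Phi^{\wh g,q},P)\le C\|\wh g-g^\star\|_{L^2(P_X)}+O_p(N_0^{-1/2}).
\]
Since $C$ in the corollary comes from Assumption~\ref{ass:margin}(iii), take $C=C_L$. The hypothesis $\|\wh g-g^\star\|_{L^2(P_X)}=O_p(r_{n_{\mathrm{sc}}})$ turns the first term into $C_L\,O_p(r_{n_{\mathrm{sc}}})$; the statement's $C_L r_{n_{\mathrm{sc}}}$ is read in this stochastic sense.

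The step needing care is replacing $N_0^{-1/2}$ by $n_{\mathrm{cal}}^{-1/2}$. Since $N_0\sim\mathrm{Binomial}(n_{\mathrm{cal}},\pi_0)$ and $\pi_0$ is bounded away from zero by assumption, Hoeffding's inequality gives
\[
\Prob\bigl(N_0<\pi_0 n_{\mathrm{cal}}/2\bigr)\le \exp\bigl(-n_{\mathrm{cal}}\pi_0^2/2\bigr).
\]
On the complementary event $N_0^{-1/2}\le(2/(\pi_0 n_{\mathrm{cal}}))^{1/2}$. Hence $O_p(N_0^{-1/2})=O_p(n_{\mathrm{cal}}^{-1/2})$, with a constant depending on $\pi_0$. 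The main obstacle is to confirm that the $O_p(N_0^{-1/2})$ term of Corollary~\ref{cor:power_loss} is uniform in $N_0$ and not merely asymptotic for a fixed $N_0$ sequence. I would take that term to come from a DKW bound on the empirical null score CDF, which holds conditionally on $N_0$. Under that reading, conditioning on $N_0$ and then integrating over the high-probability event above legitimately transfers the rate to $n_{\mathrm{cal}}$.

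Finally, for the smoothness clause, substitute $r_{n_{\mathrm{sc}}}=n_{\mathrm{sc}}^{-s/(2s+d)}$ (up to logarithmic factors) into the bound. With $n_{\mathrm{sc}}\asymp n_{\mathrm{cal}}$,
\[
n_{\mathrm{cal}}^{-1/2}\asymp n_{\mathrm{sc}}^{-1/2}\le n_{\mathrm{sc}}^{-s/(2s+d)},
\]
because $s/(2s+d)<1/2$ for $d\ge1$. The calibration term is therefore absorbed into the score term, and the total power loss is $O_p(n_{\mathrm{sc}}^{-s/(2s+d)})$ up to logarithmic factors.
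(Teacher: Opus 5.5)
Your proposal is correct and follows the paper's proof. Both arguments note that $h_\lambda$ is a strictly increasing transform of $\eta$, so the oracle powers coincide, then invoke Corollary~\ref{cor:power_loss} and substitute the assumed rate. Both also use $N_0\asymp n_{\mathrm{cal}}$ to turn $O_p(N_0^{-1/2})$ into $O_p(n_{\mathrm{cal}}^{-1/2})$; you justify this explicitly via Hoeffding and show the calibration term is absorbed, where the paper only asserts it.

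One aside is not quite right: parts (i) and (iii) of Assumption~\ref{ass:margin} are not invariant under monotone reparametrization of the score, because they depend on its scale. Nothing hinges on this, though, since the theorem imposes the assumption directly at $g^\star=h_\lambda$.
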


\section{Proofs}
\label{app:proofs}

Throughout the proofs, $Z=\ind\{Y\in\Ical\}$ and
$\Ccal_0=\{i\in\Dcal_\mathrm{cal}:Y_i\notin\Ical\}$ denotes the observed
non-target calibration subset, with $N_0=|\Ccal_0|$.  We write
$\Fcal_{\mathrm{tr,sc}}=\sigma(\Dcal_\mathrm{tr},\Dcal_\mathrm{sc})$ for the
information used before calibration.  Conditional on $\Fcal_{\mathrm{tr,sc}}$,
the learned score is fixed.

\subsection{Proof of Theorem~\ref{thm:null_validity}}
\label{proof:superuniform}
\begin{proof}
Fix a null test candidate $j$, and condition on
$\Fcal_{\mathrm{tr,sc}}$, on $N_0$, and on the event
$Y_{N+j}\notin\Ical$.  Since the score $g$ is fixed after conditioning on
$\Fcal_{\mathrm{tr,sc}}$, Assumption~\ref{ass:null_exch} implies that
\[
  \{g(X_i):i\in\Ccal_0\}\cup\{g(X_{N+j})\}
\]
is an exchangeable collection of $N_0+1$ null scores.

First consider the randomized p-value.  Randomly break all ties involving the
test score, equivalently using the uniform variable $U_j$ in the definition of
$\tilde p_j(g)$.  Under exchangeability, the randomized upper-tail rank of the
test score is uniformly distributed over the interval $[0,N_0+1]$ after this tie
randomization.  Therefore, conditional on $N_0$ and
$\Fcal_{\mathrm{tr,sc}}$,
\[
  \Prob\{\tilde p_j(g)\le t
  \mid Y_{N+j}\notin\Ical,N_0,\Fcal_{\mathrm{tr,sc}}\}=t,
  \qquad t\in[0,1].
\]
Averaging over $\Fcal_{\mathrm{tr,sc}}$ gives the exact statement conditional on
$N_0$.

For the conservative p-value, let
\[
  B_j=\#\{i\in\Ccal_0:g(X_i)>g(X_{N+j})\},
  \qquad
  E_j=\#\{i\in\Ccal_0:g(X_i)=g(X_{N+j})\}.
\]
Then
\[
  p_j(g)=\frac{1+B_j+E_j}{N_0+1},
  \qquad
  \tilde p_j(g)=\frac{B_j+U_j(1+E_j)}{N_0+1}.
\]
Hence $p_j(g)\ge \tilde p_j(g)$ almost surely.  It follows that, for all
$t\in[0,1]$,
\[
  \Prob\{p_j(g)\le t\mid Y_{N+j}\notin\Ical\}
  \le
  \Prob\{\tilde p_j(g)\le t\mid Y_{N+j}\notin\Ical\}
  \le t .
\]
This proves the conservative super-uniformity statement.
\end{proof}

\subsection{Proof of Corollary~\ref{cor:null_fdr}}
\label{proof:null_fdr}
\begin{proof}
Theorem~\ref{thm:null_validity} gives marginal super-uniformity for every null
p-value.  The Benjamini--Yekutieli theorem controls FDR at level $q$ under
arbitrary dependence when the null p-values are marginally super-uniform.  This
proves the arbitrary-dependence statement.  For the Benjamini--Hochberg
procedure, marginal super-uniformity alone is not sufficient under arbitrary
dependence.  Under the stated PRDS condition on the joint null p-values, the
standard Benjamini--Hochberg theorem gives $\FDR\le q$.
\end{proof}

\subsection{Proof of Proposition~\ref{prop:oracle}}
\label{proof:oracle}
\begin{proof}
Fix a null selection level $\alpha_0\in[0,1]$.  By the Neyman--Pearson lemma,
among all measurable selection regions $A\subseteq\Xcal$ satisfying
\[
  \Prob\{X\in A\mid Z=0\}\le\alpha_0,
\]
the region that maximizes $\Prob\{X\in A\mid Z=1\}$ is an upper level set of the
likelihood ratio $dP_{X\mid Z=1}/dP_{X\mid Z=0}$, with the usual randomized
boundary convention if needed.  By Bayes' rule,
\[
  \frac{dP_{X\mid Z=1}}{dP_{X\mid Z=0}}(x)
  =
  \frac{\eta(x)}{1-\eta(x)}\frac{\pi_0}{\pi_1},
\]
whenever the Radon--Nikodym derivative is defined.  The map
$u\mapsto u/(1-u)$ is strictly increasing on $(0,1)$, and the factor
$\pi_0/\pi_1$ does not depend on $x$.  Therefore upper level sets of the
likelihood ratio are the same as upper level sets of $\eta$, up to boundary
ties.  The same ranking is obtained from any strictly increasing transformation
of $\eta$.
\end{proof}

\subsection{Proof of Proposition~\ref{prop:alignment_mono}}
\label{proof:alignment_mono}
\begin{proof}
Under the location-scale model $Y=\mu(x)+\sigma\varepsilon$ with fixed
$\sigma>0$ and $\varepsilon\perp X$ with continuous CDF $F_\varepsilon$,
\[
  \eta_\Ical(x)
  =\Prob(Y\ge c\mid X=x)
  =\Prob\!\left(\varepsilon\ge\tfrac{c-\mu(x)}{\sigma}\right)
  =1-F_\varepsilon\!\left(\tfrac{c-\mu(x)}{\sigma}\right).
\]
The map $t\mapsto 1-F_\varepsilon\!\left(\tfrac{c-t}{\sigma}\right)$ is
nondecreasing in $t$ because $F_\varepsilon$ is nondecreasing and
$t\mapsto (c-t)/\sigma$ is decreasing; it is strictly increasing on the support
where $F_\varepsilon$ is strictly increasing.  Hence $\eta_\Ical(x)=\phi(\mu(x))$
for a strictly increasing $\phi$.  The conditional mean score
$s_{\mathrm{mean}}(x)=\mu(x)$ therefore satisfies
$\eta_\Ical(x)=\phi(s_{\mathrm{mean}}(x))$, so $s_{\mathrm{mean}}$ is
selection-aligned.  The signed residual score used by clipped cfBH is a
monotone transform of $\mu(x)$ at the population level under the same model, and
the likelihood-ratio score $R(x;c)=\Prob(Y\le c\mid X=x)/\Prob(Y>c\mid X=x)$ of
\citet{QinJ2025arxiv} equals $(1-\eta_\Ical(x))/\eta_\Ical(x)$, a strictly
decreasing transform of $\eta_\Ical$, so selecting its small values is aligned as
well.  Since the null-calibrated p-value in \eqref{eq:null_pvalue} depends on a
score only through its induced order, all of these scores produce the same
selected set as $\eta_\Ical$.
\end{proof}

\subsection{Proof of Theorem~\ref{thm:asymp_optimal}}
\label{proof:asymp_optimal}
\begin{proof}
Let $g^\star$ be a strictly increasing transformation of $\eta$, and suppose
$\|\wh g-g^\star\|_\infty\to0$.  For any score $g$, write
\[
  \bar F_z^g(\tau)=\Prob\{g(X)\ge\tau\mid Z=z\},
  \qquad z\in\{0,1\}.
\]
The bounded-density part of Assumption~\ref{ass:margin} implies that small
uniform perturbations of the score produce small uniform perturbations of the
null and target upper-tail probabilities.  Hence
\[
  \sup_\tau |\bar F_z^{\wh g}(\tau)-\bar F_z^{g^\star}(\tau)|\to0,
  \qquad z\in\{0,1\}.
\]
The empirical null upper-tail distribution estimated from the null calibration
subset converges uniformly to its population counterpart by the
Glivenko--Cantelli theorem.  Since the number of test candidates $m$ is fixed,
the null-calibrated p-values computed from $\wh g$ converge uniformly over
$j=1,\ldots,m$ to the corresponding population p-values computed from
$g^\star$.

Assumption~\ref{ass:margin} gives a unique stable population BH threshold for
$g^\star$.  Therefore the empirical BH operating threshold for $\wh g$ converges
to the population threshold for $g^\star$.  The bounded-density condition also
implies that a test score lies exactly on the limiting boundary with probability
zero.  Consequently, each selection indicator converges in probability to its
oracle counterpart.  Since $m$ is fixed, bounded convergence yields
\[
  \Power(\wh g)\to \Power(\Phi^{g^\star,q}) .
\]
Finally, $g^\star$ and $\eta$ have identical rankings because $g^\star$ is a
strictly increasing transformation of $\eta$.  Hence
$\Power(\Phi^{g^\star,q})=\Power(\Phi^{\eta,q})$, which proves the claim.
\end{proof}

\subsection{Proof of Corollary~\ref{cor:power_loss}}
\label{proof:power_loss}
\begin{proof}
The local Lipschitz condition in Assumption~\ref{ass:margin} gives
\[
  |\mathcal P(\wh g)-\mathcal P(g^\star)|
  \le
  C_L\|\wh g-g^\star\|_{L^2(P_X)}
\]
for scores in a neighborhood of $g^\star$.  This is the contribution of score
estimation.  The empirical procedure also estimates the null upper-tail score
distribution from $N_0$ observed non-target calibration examples.  Conditional on
$N_0$, the Dvoretzky--Kiefer--Wolfowitz inequality gives a uniform
$O_p(N_0^{-1/2})$ error for this empirical null distribution.  The stable
crossing condition in Assumption~\ref{ass:margin} translates this distributional
error into an $O_p(N_0^{-1/2})$ perturbation of the BH operating threshold and
of the resulting power.  Combining the two terms gives
\[
  \Power(\Phi^{g^\star,q},P)-\Power(\Phi^{\wh g,q},P)
  \le
  C\|\wh g-g^\star\|_{L^2(P_X)} + O_p(N_0^{-1/2}).
\]
If $g^\star$ is a strictly increasing transformation of $\eta$, then
Proposition~\ref{prop:oracle} implies that $g^\star$ and $\eta$ induce the same
oracle ranking, so their population oracle powers are equal.
\end{proof}

\subsection{Proof of Theorem~\ref{thm:nccs_rate}}
\label{proof:nccs_rate}
\begin{proof}
The function
\[
  g^\star(x)=h_\lambda(x)
  =
  \log\!\left\{
  \frac{\pi_0\eta(x)}{\lambda\pi_1(1-\eta(x))}
  \right\}
\]
is a strictly increasing transformation of $\eta(x)$, so it induces the same
oracle ranking as $\eta$.  Hence
$\Power(\Phi^{g^\star,q},P)=\Power(\Phi^{\eta,q},P)$.
Corollary~\ref{cor:power_loss} gives
\[
  \Power(\Phi^{\eta,q},P)-
  \Power(\Phi_{n_{\mathrm{sc}},n_{\mathrm{cal}}}^{\mathrm{NCCS}},P)
  \le
  C_L\|\wh g-g^\star\|_{L^2(P_X)} + O_p(N_0^{-1/2}).
\]
The assumed score-estimation rate gives
\[
  \|\wh g-g^\star\|_{L^2(P_X)}=O_p(r_{n_{\mathrm{sc}}}).
\]
If the marginal probability of the non-target event is bounded away from zero,
then $N_0\asymp n_{\mathrm{cal}}$ with high probability, and the calibration
term is $O_p(n_{\mathrm{cal}}^{-1/2})$.  Combining these bounds proves the
stated rate.  The final Sobolev-rate statement follows by substituting the
standard nonparametric estimation rate \citep{StoneCJ82aos,TsybakovAB2009book}.
\end{proof}

\end{document}